\theoremstyle{plain}
\newtheorem{theorem}{Theorem}[section]
\newtheorem{proposition}[theorem]{Proposition}
\newtheorem{lemma}[theorem]{Lemma}
\theoremstyle{definition}
\newtheorem{assumption}[theorem]{Assumption}
\theoremstyle{remark}
\DeclareMathOperator*{\argmax}{arg\,max}
\DeclareMathOperator*{\supp}{supp}
\DeclareMathOperator{\Tr}{tr}
\newcommand{\Cov}{\mathrm{Cov}}
\newcommand{\Var}{\mathrm{Var}}
\begin{document}

%

%

\twocolumn[

\aistatstitle{Asymptotic Performance of Time-Varying Bayesian Optimization}

\aistatsauthor{ Anthony Bardou \And Patrick Thiran}

\aistatsaddress{ IC, EPFL \And IC, EPFL } ]

\begin{abstract}
  Time-Varying Bayesian Optimization~(TVBO) is the go-to framework for optimizing a time-varying black-box objective function that may be noisy and expensive to evaluate, but its excellent empirical performance remains to be understood theoretically. Is it possible for the instantaneous regret of a TVBO algorithm to vanish asymptotically, and if so, when? We answer this question of great importance by providing upper bounds and algorithm-independent lower bounds for the cumulative regret of TVBO algorithms. In doing so, we provide important insights about the TVBO framework and derive sufficient conditions for a TVBO algorithm to have the no-regret property. To the best of our knowledge, our analysis is the first to cover all major classes of stationary kernel functions used in practice.
\end{abstract}

\section{Introduction} \label{sec:intro}

Many real-world problems boil down to the optimization of a time-varying black-box function $f : \mathcal{S} \times \mathcal{T} \to \mathbb{R}$, where $\mathcal{S} \subset \mathbb{R}^d$ and $\mathcal{T} \subseteq \mathbb{R}$. Such time-varying problems occur when the objective function, which depends on the problem parameters $\bm x \in \mathcal{S}$, is also subjected to time-varying factors that cannot be controlled by the optimizer. Such a setting is common in online clustering~\citep{aggarwal2004framework}, management of unmanned aerial vehicles~\citep{melo2021dynamic} or network management~\citep{kim2019dynamic}.

The Bayesian Optimization~(BO) framework is known to be sample-efficient (which is a desirable property when $f$ is expensive to query) and to offer a no-regret guarantee for static black boxes (see Section~\ref{sec:background-tvbo} for more details), not only in vanilla scenarios~\citep{gpucb} but also in challenging contexts such as high-dimensional settings~\citep{bardourelaxing}. At each iteration, it usually relies on a Gaussian Process~(GP)~\citep{williams2006gaussian}, controlled by a kernel $k$ and conditioned on collected noisy observations, to simultaneously discover and optimize the unknown objective function $f$.

Time-Varying Bayesian Optimization~(TVBO) is the natural extension of the BO framework to the time-varying setting. It exploits a spatial (respectively, temporal) kernel $k_S$ (resp., $k_T$) to model spatio-temporal dynamics. Unlike static BO algorithms, the asymptotic performance of TVBO algorithms is poorly understood. Only a few papers have derived linear upper regret bounds and a linear algorithm-independent lower regret bound for TVBO algorithms when $k_T$ is an exponential kernel~\citep{bogunovic2016time, brunzemaevent}. As most time-varying optimization problems are modeled with a different temporal kernel $k_T$ (e.g.,~Matérn kernel with smoothness parameter $\nu > 1/2$, periodic kernel), two questions of major theoretical importance remain open: (i)~can a TVBO algorithm incur a sublinear regret when $k_T$ is not an exponential kernel and (ii)~if so, under which conditions?

We answer these questions by conducting regret analyses of TVBO algorithms that hold under four popular classes of stationary temporal kernels. Because most regret analyses rely on spectral properties of the covariance operator associated with $k$, we start by studying some properties of the operator spectrum of separable spatio-temporal kernels $k$ in Section~\ref{sec:operator_spectral_prop}. This in turn motivates an in-depth study of the operator spectrum of the temporal kernel $k_T$. Therefore, in Section~\ref{sec:temp_spectrum}, we propose a classification that includes the most popular categories of stationary temporal kernels (see the column labels of Table~\ref{tab:wrapup}) and derive results on their operator spectra. Finally, Theorems~\ref{thm:lower_regret_bound} and~\ref{thm:upper_regret_bound} in Section~\ref{sec:asymptotical_guarantees} provide an algorithm-independent regret bound and an upper regret bound on the cumulative regret of TVBO algorithms associated with each class of temporal kernels. Our results are summarized in Table~\ref{tab:wrapup}. In particular, our theorems show that the scaling of the cumulative regret is mostly controlled by the spectral density associated with $k_T$ and provide sufficient conditions under which a TVBO algorithm is no-regret. Finally, throughout the paper, we illustrate every major insight with numerical experiments that can be run on a laptop.

\begin{table*}[t]
  \caption{Properties of the most popular classes of stationary temporal kernels $k_T$ according to the support of their spectral densities $S_T$ (see~\eqref{eq:spectral_density}). For each kernel class, the table reports the properties of $\supp(S_T)$ (boundedness and discreteness), an example of a temporal kernel $k_T$ from this class and the support of its spectral density, as well as the guarantees about the cumulative regret $R_n$ of TVBO algorithms provided by Theorems~\ref{thm:lower_regret_bound} and~\ref{thm:upper_regret_bound}. All results hold with high probability.}
  \label{tab:wrapup}
  \centering
  \begin{tabular}{lcccc}
    \toprule
    & \multicolumn{4}{c}{\textbf{Temporal Kernel Class}}\\
    \cmidrule(r){2-5}
     & Broadband & Band-Limited & Almost-Periodic & Low-Rank \\
    \midrule
    Bounded $\supp(S_T)$ & No & Yes & No & Yes\\
    Discrete $\supp(S_T)$ & No & No & Yes & Yes\\
    \midrule
    Example of $k_T$ & RBF & Sinc($\tau$) & Periodic($r$) & Sum of $L$ Cosines \\
    $\supp(S_T)$ & $\mathbb{R}$ & $[-\tau, \tau]$ & $\left\{2\pi p/r\right\}_{p \in \mathbb{Z}}$ & $\left\{\omega_p\right\}_{p \in [L]}$\\
    \midrule
    Guarantees on $R_n$ & $R_n \in \Theta(n)$ & $R_n \in \Theta(n)$ & $R_n \in o(n)$ & $R_n \in o(n)$\\
  \bottomrule
  \end{tabular}
\end{table*}

\section{Background and Core Assumptions}

\subsection{Time-Varying Bayesian Optimization} \label{sec:background-tvbo}

\paragraph{Surrogate Model.} The goal of a TVBO algorithm is to optimize a time-varying black box $f : \mathcal{S} \times \mathcal{T} \to \mathbb{R}$, where $\mathcal{S} \subset \mathbb{R}^d$ is a compact problem parameter space (i.e., the spatial domain) and $\mathcal{T} \subseteq \mathbb{R}$ is the temporal domain. It assumes that $f$ is a $\mathcal{GP}\left(0, k\right)$ whose mean is zero without loss of generality~(w.l.o.g.), and whose covariance function $k : \left(\mathcal{S} \times \mathcal{T}\right)^2 \to \mathbb{R}$ plays a key role in defining the properties of the GP. Given a dataset of previously collected observations $\mathcal{D} = \left\{(\bm x_i, t_i, y_i)\right\}_{i \in [n]}$, where $y_i = f(\bm x_i, t_i) + \epsilon, \epsilon \sim \mathcal{N}\left(0, \sigma^2_0\right)$ and where $\sigma^2_0$ is the observational noise, the prior GP conditioned on $\mathcal{D}$ produces a posterior GP whose mean function $\mathbb{E}\left[f(\bm x, t)|\mathcal{D}\right] = \mu_n(\bm x, t)$ is
\begin{equation}
    \mu_n(\bm x, t) = k^\top((\bm x, t), \mathcal{D}) \left(k(\mathcal{D}, \mathcal{D}) + \sigma^2_0 \bm I\right)^{-1} \bm y_n, \label{eq:posterior_mean}
\end{equation}
and covariance function $\Cov\left[f(\bm x, t), f(\bm x', t')|\mathcal{D}\right] = \Cov_n((\bm x, t), (\bm x', t'))$ is
\begin{equation}
    \begin{split}\label{eq:posterior_cov}
        \Cov_n((\bm x, t), (\bm x', t')) &= k((\bm x, t), (\bm x', t')) - k^\top((\bm x, t), \mathcal{D}) \\&\left(k(\mathcal{D}, \mathcal{D}) + \sigma^2_0 \bm I\right)^{-1} k((\bm x', t'), \mathcal{D}),
    \end{split}
\end{equation}
where $k(\mathcal{X}, \mathcal{Y}) = \left(k((\bm x_i, t_i), (\bm x_j, t_j)\right)_{\substack{(\bm x_i, t_i)\in \mathcal{X},(\bm x_j, t_j) \in \mathcal{Y}}}$, $\bm y_n = \left(y_1, \cdots, y_n\right)$ and where $\bm I$ is the $n \times n$ identity matrix. It is also common to denote the posterior variance $\Var\left[f(\bm x, t) | \mathcal{D}\right] = \Cov_n((\bm x, t), (\bm x, t))$ by $\sigma^2_n(\bm x, t)$.

\paragraph{Acquisition Function.} A new observation collected at time $t_{n+1}$ must allow the TVBO algorithm to improve the accuracy of the surrogate model~(exploration) while simultaneously getting a function value close to what is thought to be $\max_{\bm x \in \mathcal{S}} f(\bm x, t_{n+1})$~(exploitation). To do so, an acquisition function $\varphi_{n} : \mathcal{S} \times \mathcal{T} \to \mathbb{R}$ (computed using the GP surrogate conditioned on $\mathcal{D}$) that trades off exploration and exploitation is maximized, such that $\bm x_{n+1} = \argmax_{\bm x \in \mathcal{S}} \varphi_n(\bm x, t_{n+1})$.

\paragraph{Asymptotic Performance.} The optimization error of a TVBO algorithm at time $t_i$ is measured by the instantaneous regret $r_i = f(\bm x^*_i, t_i) - f(\bm x_i, t_i)$, where $\bm x^*_i = \argmax_{\bm x \in \mathcal{S}} f(\bm x, t_i)$ and $\bm x_i = \argmax_{\bm x \in \mathcal{S}} \varphi(\bm x, t_i)$. This instantaneous regret is aggregated over a time horizon $n$ to form the cumulative regret $R_n = \sum_{i = 1}^n r_i$. A BO algorithm has the no-regret property if it verifies $\lim_{n \to \infty} R_n/n = 0$, which is equivalent to ensuring that, asymptotically, the algorithm globally maximizes the black box $f$. So far, there exist a single lower bound and two upper bounds on $R_n$~\citep{bogunovic2016time, brunzemaevent} for TVBO algorithms that use a particular kernel $k$. All these bounds show a linear cumulative regret (i.e., $R_n \in \Theta(n)$). Other upper bounds on $R_n$ are derived in another line of work, under frequentist assumptions\citep{zhou2021no, deng2022weighted, hong2023optimization, iwazaki2024near}. These bounds scale sublinearly (i.e., $R_n \in o(n)$), but require the variational budget of $f$ to be bounded. This is equivalent to assuming that $f$ becomes asymptotically static, which is a very restrictive assumption.

\paragraph{Covariance Operator.} Given a probability measure $\mu$ on an arbitrary compact domain $\mathcal{X}$, every continuous, positive-definite kernel $k$ has an associated covariance operator $\Sigma_k : L^2(\mathcal{X}) \to L^2(\mathcal{X})$ defined by
\begin{equation} \label{eq:covop_def}
    \left(\Sigma_kf\right)(\bm x) = \oint_\mathcal{X} k(\bm u, \bm x) f(\bm x) d\mu(\bm u).
\end{equation}
This operator is compact, Hilbert-Schmidt and self-adjoint. Therefore, it admits a countable (possibly infinite) set of nonnegative eigenvalues $\left\{\lambda_i(\Sigma_k)\right\}_{i \in \mathbb{N}}$ and associated orthonormal eigenfunctions $\left\{\phi_i\right\}_{i \in \mathbb{N}}$ in $L^2(\mathcal{X})$ such that, for every $\bm x \in \mathcal{X}, (\Sigma_k\phi_i)(\bm x) = \lambda_i(\Sigma_k) \phi_i(\bm x)$. In the following, we will refer to $\left\{\lambda_i(\Sigma_k)\right\}_{i \in \mathbb{N}}$ as \emph{the operator spectrum of $k$}. For more details on covariance operators, see Appendix~\ref{app:covariance}.

\subsection{Core Assumptions} \label{sec:background-assumptions}

To the best of our knowledge, all TVBO algorithms in the literature (including those that come up with regret guarantees) follow a minimal set of assumptions~\citep{bogunovic2016time, nyikosa2018bayesian, bardou2024too, brunzemaevent}, which are Assumptions~\ref{ass:surrogate}-\ref{ass:lipschitz}. Although some papers may introduce more restrictive assumptions, all the results in Sections~\ref{sec:operator_spectral_prop}-\ref{sec:asymptotical_guarantees} of this paper rely solely on Assumptions~\ref{ass:surrogate}-\ref{ass:lipschitz} below. Assumption~\ref{ass:surrogate} justifies the Bayesian setting by putting a GP prior on $f$. Assumption~\ref{ass:covariance} is a simple, popular and powerful way to encode spatio-temporal dynamics in the GP using two covariance functions, $k_S$ and $k_T$, dedicated to spatial and temporal dynamics, respectively. Assumption~\ref{ass:sampling_freq} ensures that observations are collected at a fixed sampling frequency $0 < 1/\Delta < +\infty$ and is often implicitly made in TVBO papers. Finally, Assumption~\ref{ass:lipschitz} ensures that the GP is not too erratic in the spatial domain. It is satisfied when $k_S$ is an RBF kernel or a Matérn kernel with smoothness parameter $\nu > 2$. However, it can fail for kernels producing rougher GPs (e.g., Ornstein-Uhlenbeck). Assumption~\ref{ass:lipschitz} is used in regret proofs that involve the GP-UCB acquisition function~\citep{gpucb, bogunovic2016time}.

\begin{assumption}[Surrogate Model] \label{ass:surrogate}
    The time-varying black box $f : \mathcal{S} \times \mathcal{T}$ is a $\mathcal{GP}(0, k)$, where $\mathcal{S} = [0, 1]^d$ without loss of generality and where $k : \left(\mathcal{S} \times \mathcal{T}\right)^2 \to \mathbb{R}$ is a covariance function.
\end{assumption}

\begin{assumption}[Covariance Function] \label{ass:covariance}
    The covariance function $k : \left(\mathcal{S} \times \mathcal{T}\right)^2 \to \mathbb{R}$ admits the decomposition
    \begin{equation} \label{eq:decomposable_covariance}
        k((\bm x, t), (\bm x', t')) = \lambda k_S(\bm x, \bm x') k_T(t, t')
    \end{equation}
    where $k_S : \mathcal{S} \times \mathcal{S} \to [-1, 1]$ (resp., $k_T : \mathcal{T} \times \mathcal{T} \to [-1, 1]$) is a stationary correlation function defined on the spatial (resp., temporal) domain and where $\lambda > 0$. Without loss of generality, we further assume $\lambda = k_S(\bm x, \bm x) = k_T(t, t) = 1$ for all $(\bm x, t) \in \mathcal{S} \times \mathcal{T}$.
\end{assumption}

\begin{assumption}[Sampling Frequency of Observations] \label{ass:sampling_freq}
    Observations are sampled at a fixed frequency $1/\Delta$. Consequently, $\mathcal{T} = \left\{i\Delta\right\}_{i \in \mathbb{N}}$ and the time component of the $i$th observation $\left(\bm x_i, t_i, y_i\right)$ is necessarily $t_i = i\Delta$.
\end{assumption}

\begin{assumption}[Lipschitzness in Space] \label{ass:lipschitz}
    Let $g \sim \mathcal{GP}(0, k_S)$. Then, for any $\bm x \in \mathcal{S} \subset \mathbb{R}^d$, any $L > 0$ and any $i \in [d]$,
    \begin{equation*}
        \mathbb{P}\left[\left|\frac{\partial g(\bm x)}{\partial x_i}\right| > L\right] \leq a e^{-(L/b)^2}.
    \end{equation*}
\end{assumption}

\section{Operator Spectrum of Separable Spatio-Temporal Kernels} \label{sec:operator_spectral_prop}

Most regret bounds in the BO literature rely on the spectral properties of the covariance operator $\Sigma_k$ associated with $k$ with respect to~(w.r.t.)\ a the uniform probability measure~\citep{gpucb, valko2013finite, scarlett2017lower, whitehouse2023sublinear}. The bounds we derive in this paper are no exception, and this motivates us to study the spectrum of $\Sigma_k$. The following result is proven and discussed in Appendix~\ref{app:covariance_spectrum} and provides a general expression for the eigenvalues of $\Sigma_k$.

\begin{proposition} \label{prop:operator_spectrum}
    Let $k$ be a covariance function that satisfies Assumption~\ref{ass:covariance}. Fix $n \in \mathbb{N}$ and define $\mathcal{T}_n = \left\{i\Delta\right\}_{i \in [n]}$. Let $\Sigma_k$, $\Sigma_{k_S}$ and $\Sigma_{k_T}$ be the covariance operators associated with $k$, $k_S$ and $k_T$, respectively, on $\mathcal{S} \times \mathcal{T}_n$ with respect to a probability measure $\mu$. Let $\left\{\lambda_i\right\}_{i \in \mathbb{N}}$, $\left\{\lambda^S_i\right\}_{i \in \mathbb{N}}$ and $\left\{\lambda^T_i\right\}_{i \in [n]}$ be the spectra of $\Sigma_k$, $\Sigma_{k_S}$ and $\Sigma_{k_T}$, respectively. Then, denoting by $\left(i_l\right)_{l \in \mathbb{N}}$ and $\left(j_l\right)_{l \in \mathbb{N}}$ the two sequences of indices such that the sequence $\left(\lambda^S_{i_l} \lambda^T_{j_l}\right)_{l \in \mathbb{N}}$ is sorted in descending order, we have $\lambda_l = \lambda_{i_l}^S \lambda_{j_l}^T$.
\end{proposition}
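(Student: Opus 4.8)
The plan is to exploit the product structure $k = k_S k_T$ of Assumption~\ref{ass:covariance} so that $\Sigma_k$ becomes, under the natural isometry, the tensor product $\Sigma_{k_S} \otimes \Sigma_{k_T}$. Concretely, working with the product measure $\mu = \mu_S \otimes \mu_T$ on $\mathcal{S} \times \mathcal{T}$ gives the identification $L^2(\mathcal{S} \times \mathcal{T}, \mu) \cong L^2(\mathcal{S}, \mu_S) \otimes L^2(\mathcal{T}, \mu_T)$, and I will show that the eigenfunctions of $\Sigma_k$ are exactly the products of eigenfunctions of $\Sigma_{k_S}$ and $\Sigma_{k_T}$, with eigenvalues the corresponding products $\bar\lambda^S_i \bar\lambda^T_j$.

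Let $\{\bar\phi^S_i\}_{i \in \mathbb{N}}$ and $\{\bar\phi^T_j\}_{j \in \mathbb{N}}$ be the orthonormal eigenfunctions of $\Sigma_{k_S}$ and $\Sigma_{k_T}$ associated with $\{\bar\lambda^S_i\}$ and $\{\bar\lambda^T_j\}$. The first step is a direct verification that each $\psi_{ij}(\bm x, t) = \bar\phi^S_i(\bm x)\,\bar\phi^T_j(t)$ is an eigenfunction of $\Sigma_k$. Substituting $k((\bm x, t),(\bm u, s)) = k_S(\bm x, \bm u) k_T(t, s)$ into~\eqref{eq:integral_operator} and applying Fubini's theorem on the product measure,
\begin{align}
(\Sigma_k \psi_{ij})(\bm x, t) &= \int_{\mathcal{S}} k_S(\bm x, \bm u)\bar\phi^S_i(\bm u)\,d\mu_S(\bm u) \int_{\mathcal{T}} k_T(t, s)\bar\phi^T_j(s)\,d\mu_T(s) \\
&= \bar\lambda^S_i\,\bar\lambda^T_j\,\psi_{ij}(\bm x, t),
\end{align}
so $\psi_{ij}$ is an eigenfunction of $\Sigma_k$ with eigenvalue $\bar\lambda^S_i \bar\lambda^T_j$.

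The second step shows that these products exhaust the spectrum. By the spectral theorem for compact self-adjoint operators, $\{\bar\phi^S_i\}$ (completed, if necessary, by an orthonormal basis of $\ker \Sigma_{k_S}$, whose associated eigenvalues are $0$) is a complete orthonormal basis of $L^2(\mathcal{S}, \mu_S)$, and similarly for $\{\bar\phi^T_j\}$. The standard fact that products of complete orthonormal bases of two Hilbert spaces form a complete orthonormal basis of their tensor product then implies that $\{\psi_{ij}\}_{i,j}$ is a complete orthonormal basis of $L^2(\mathcal{S} \times \mathcal{T}, \mu)$. Since $\Sigma_k$ is diagonal in this basis, its eigenvalues, counted with multiplicity, are exactly $\{\bar\lambda^S_i\bar\lambda^T_j\}_{i,j}$ and nothing more. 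Finally, because $\Sigma_{k_S}$ and $\Sigma_{k_T}$ are compact we have $\bar\lambda^S_i \to 0$ and $\bar\lambda^T_j \to 0$, so for every $c > 0$ only finitely many pairs satisfy $\bar\lambda^S_i\bar\lambda^T_j > c$; hence the products admit a non-increasing enumeration. Letting $(i_l)$ and $(j_l)$ realize this enumeration and recalling that $\{\bar\lambda_l\}$ is the descending spectrum of $\Sigma_k$ yields $\bar\lambda_l = \bar\lambda^S_{i_l}\bar\lambda^T_{j_l}$.

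The main obstacle is the completeness argument of the second step, i.e. ensuring that no eigenvalue of $\Sigma_k$ is overlooked. This hinges on two points that must be stated carefully: first, that $\mu$ is a product measure, without which neither the Fubini factorization nor the tensor-product identification holds; and second, that the eigensystems of $\Sigma_{k_S}$ and $\Sigma_{k_T}$ are completed to genuine orthonormal bases of their respective $L^2$ spaces (handling possibly nontrivial null spaces), so that their products span the full space rather than a proper subspace. Once these are secured, the eigenvalue identity and the sorting step are routine.
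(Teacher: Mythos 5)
Your proof is correct and follows essentially the same route as the paper's, which multiplies the Mercer expansions of $k_S$ and $k_T$ and reads off the product eigenstructure; your tensor-product formulation is the operator-level statement of the same idea. If anything, your second step (verifying that the product eigenfunctions form a complete orthonormal basis, so that no eigenvalue of $\Sigma_k$ is missed) makes explicit a point the paper's proof leaves implicit when it reorders the double series into "the form of a Mercer decomposition."
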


Proposition~\ref{prop:operator_spectrum} follows from Assumption~\ref{ass:covariance}, which decomposes $k$ into a product of a spatial correlation function $k_S$ and a temporal correlation function $k_T$, and states that the spectrum of $\Sigma_k$ is given by all the products of an eigenvalue of the spatial covariance operator and an eigenvalue of the temporal covariance operator.

To illustrate Proposition~\ref{prop:operator_spectrum}, we build a dataset of $n$ observations $\mathcal{D} = \left\{(\bm x_i, t_i)\right\}_{i \in [n]}$,\footnote{Recall that $t_i = i \Delta$ for all $i \in [n]$.} where each $\bm x_i$ is independent and identically distributed~(i.i.d.)\ w.r.t.\ the uniform probability measure $\mu$ on $\mathcal{S}$ and we compute the covariance matrices $\bm K^{(n)} = k(\mathcal{D}, \mathcal{D})$, $\bm K_S^{(n)} = k_S(\mathcal{D}, \mathcal{D})$ and $\bm K_T^{(n)} = k_T(\mathcal{D}, \mathcal{D})$. For an i.i.d.\ design $\mathcal{D}$ w.r.t.\ $\mu$, $\lambda_i(\bm K^{(n)}) / n = \lambda_i(\Sigma_k) + \mathcal{O}(n^{-1/2})$~\citep{rosasco2010learning}. Applying this to Proposition~\ref{prop:operator_spectrum}, we have $\lambda_l(\bm K^{(n)}) = \lambda_{i_l}(\bm K^{(n)}_S/n)\lambda_{j_l}(\bm K^{(n)}_T) + \mathcal{O}(n^{1/2})$. Figure~\ref{fig:spatial_temporal_spatiotemporal_spectrum_rbf_rbf} plots this approximation on an example. Clearly, the largest products between an eigenvalue of the scaled spatial covariance matrix $\bm K^{(n)}_S / n$ and an eigenvalue of the temporal covariance matrix $\bm K^{(n)}_T$ are a good approximation of the spectrum of $\bm K^{(n)}$. This illustrates the insight provided by Proposition~\ref{prop:operator_spectrum}.

In order to use Proposition~\ref{prop:operator_spectrum} for deriving cumulative regret bounds in time-varying settings, we must understand the spectra of $\bm K^{(n)}_S$ and $\bm K^{(n)}_T$. Given a probability measure $\mu$ to collect spatial observations in the compact $\mathcal{S}$, the spectrum of $\bm K^{(n)}_S$ has been studied by numerous authors (e.g., see~\citet{koltchinskii2000random, rosasco2010learning}) and is well-understood. However, the spectrum of $\bm K^{(n)}_T$, built on the deterministically sampled observations $\mathcal{T}_n = \left\{\Delta, \cdots, n\Delta\right\}$ is less common in the BO literature. Therefore, in the next section, we propose a classification of temporal kernels $k_T$ and we provide results on the spectrum of $\bm K^{(n)}_T$ (as well as its evolution as the number of observations $n$ grows) for all classes of temporal kernels $k_T$.

\begin{figure*}
  \centering
  \includegraphics[height=5cm]{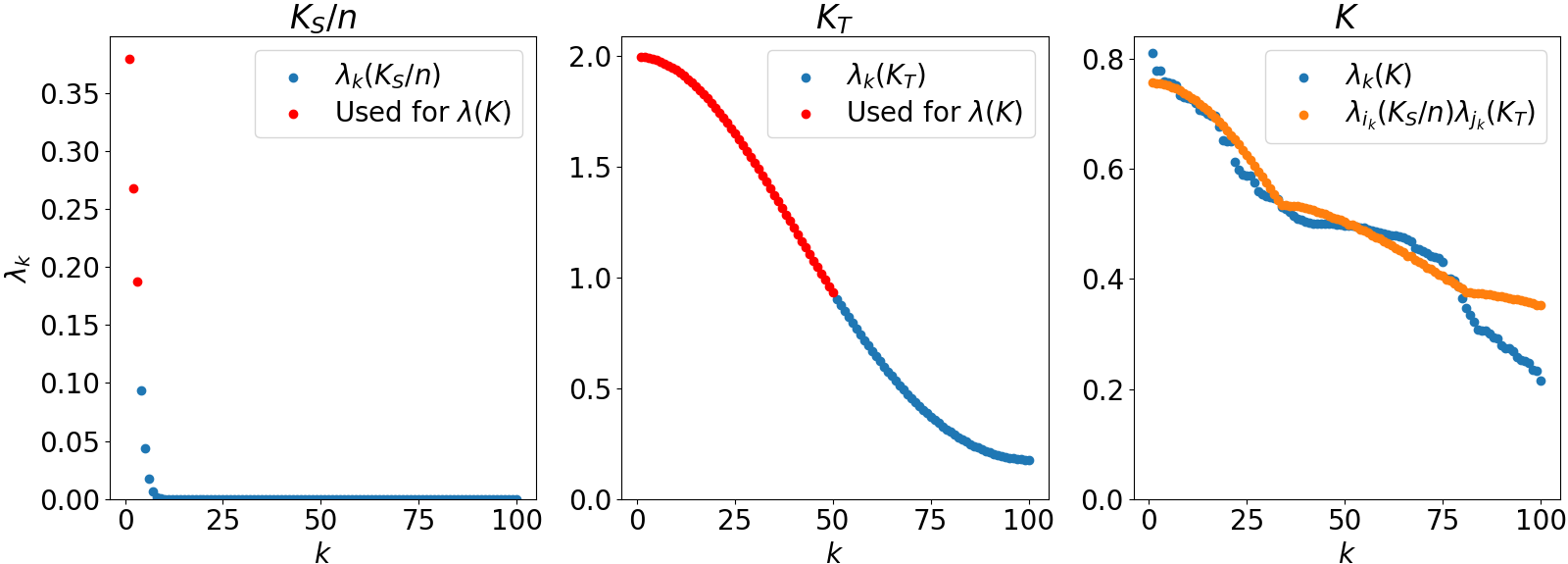}
  \caption{Spectra of $\bm K^{(n)}_S/n$~(left), $\bm K^{(n)}_T$~(center) and $\bm K^{(n)}$~(right) when $k_S$ and $k_T$ are RBF kernels and $n = 100$. The spectrum of each kernel matrix is plotted in blue and their $n$ largest products are plotted in orange. The eigenvalues in the spatial~(left) and temporal~(center) spectra involved in at least one of the $n$ largest products are colored in red. The spatial component $\bm x_i$ of an observation $\left(\bm x_i, t_i, y_i\right)$ is collected uniformly in $\mathcal{S} = [0, 1]^d$ while the temporal component is $t_i = i\Delta$.}
  \label{fig:spatial_temporal_spatiotemporal_spectrum_rbf_rbf}
\end{figure*}

\section{On the Spectrum of the Temporal Kernel Matrix} \label{sec:temp_spectrum}

In this section, we provide the results needed to better understand the spectral properties of temporal kernel matrices $\bm K^{(n)}_T$ for the most popular stationary temporal kernels $k_T$. We propose a classification of temporal kernels based on two properties, the boundedness and the discreteness of the support of their associated spectral densities $S_T$. Recall that the spectral density $S_T$ is defined as the Fourier transform of $k_T$, that is,
\begin{equation} \label{eq:spectral_density}
    S_T(\omega) = \int_{\mathcal{X}} k_T(t)e^{-2\pi i t \omega} dt.
\end{equation}

The classes are listed in the first rows of Table~\ref{tab:wrapup} along with examples of kernels that belong to these classes.

\subsection{Broadband Kernels} \label{sec:temp_spectrum-broadband}

This class comprises the most expressive (and thus, the most common) kernels in the BO framework, e.g.,~the Gaussian~(RBF) kernel, the Matérn kernel or the rational quadratic kernel. We call them "broadband" because these kernels exploit the whole frequency domain (the support of their spectral densities~\eqref{eq:spectral_density} is a symmetric unbounded interval, i.e., $\supp(S_T) = \mathbb{R}$). We provide an approximation of the spectrum of the temporal covariance matrix built with a broadband kernel.

\begin{proposition} \label{prop:continuous_support_time_spectrum}
    Let $\mathcal{D} = \left\{(\bm x_i, t_i, y_i)\right\}_{i \in [n]}$ be a dataset of $n$ observations where $\forall i \in [n], t_i = i \Delta$ and let $\bm K_T^{(n)} = k_T(\mathcal{D}, \mathcal{D})$. If the support of the spectral density $S_T$ associated with $k_T$ is a (potentially unbounded) interval, then for all $i \in [n]$,
    \begin{equation} \label{eq:temporal_matrix_spectrum}
        \lambda_i\left(\bm K^{(n)}_T\right) = \frac{1}{\Delta} S_T\left(\frac{i - n/2}{n\Delta}\right) + A_n^{(i)} + o(1),
    \end{equation}
    where $A_n^{(i)} = \sum_{m \in \mathbb{Z}^*} S_T((i - n/2)/n\Delta + m/\Delta) / \Delta$ is an aliasing error discussed in Appendix~\ref{app:spectrum_approx_continuous_spectral_density}.
\end{proposition}

From Proposition~\ref{prop:continuous_support_time_spectrum} proven in Appendix~\ref{app:spectrum_approx_continuous_spectral_density}, we see that, modulo the error terms, the eigenvalues of $\bm K^{(n)}_T$ sample $S_T$ uniformly in the interval $I = [-1/2\Delta, 1/2\Delta]$. We can therefore deduce that (i)~increasing the observation sampling frequency (i.e., reducing $\Delta$) increases the size of $I$ and (ii)~increasing the number of observations $n$ does not affect $I$ but refines the granularity of the sampling of $S_T$ on $I$. The top row of Figure~\ref{fig:temporal_spectrum} illustrates both points~(i) and~(ii) experimentally when $k_T$ is a Gaussian~(RBF) kernel. In this case, $S_T$ is also a Gaussian function, which explains the shape drawn by the orange dots in the top row of Figure~\ref{fig:temporal_spectrum}. When $1/\Delta$ is doubled (top center panel in Figure~\ref{fig:temporal_spectrum}), the eigenvalues sample $S_T$ in an interval twice larger. When $n$ doubles (top right panel in Figure~\ref{fig:temporal_spectrum}), the eigenvalues sample $S_T$ on the same interval, but with a granularity twice as high.

\begin{figure*}
  \centering
  \includegraphics[height=5cm]{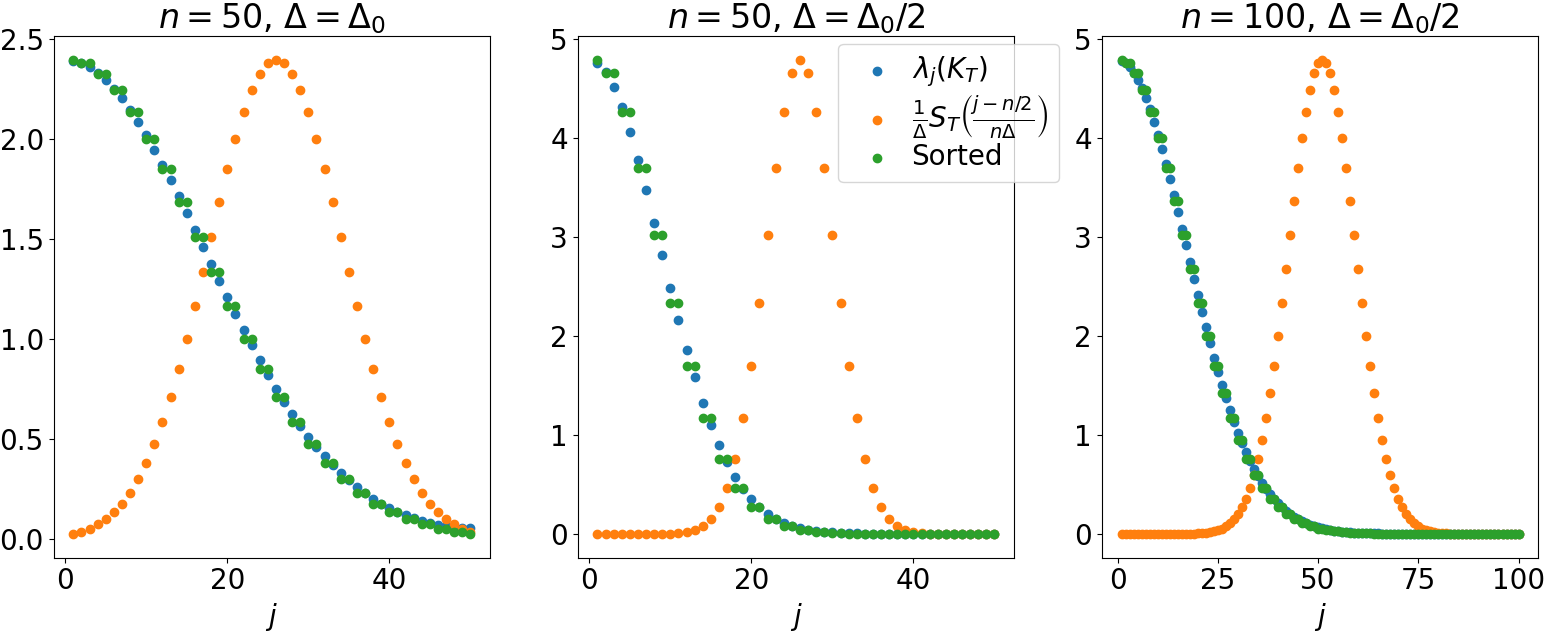} \\
  \includegraphics[height=5cm]{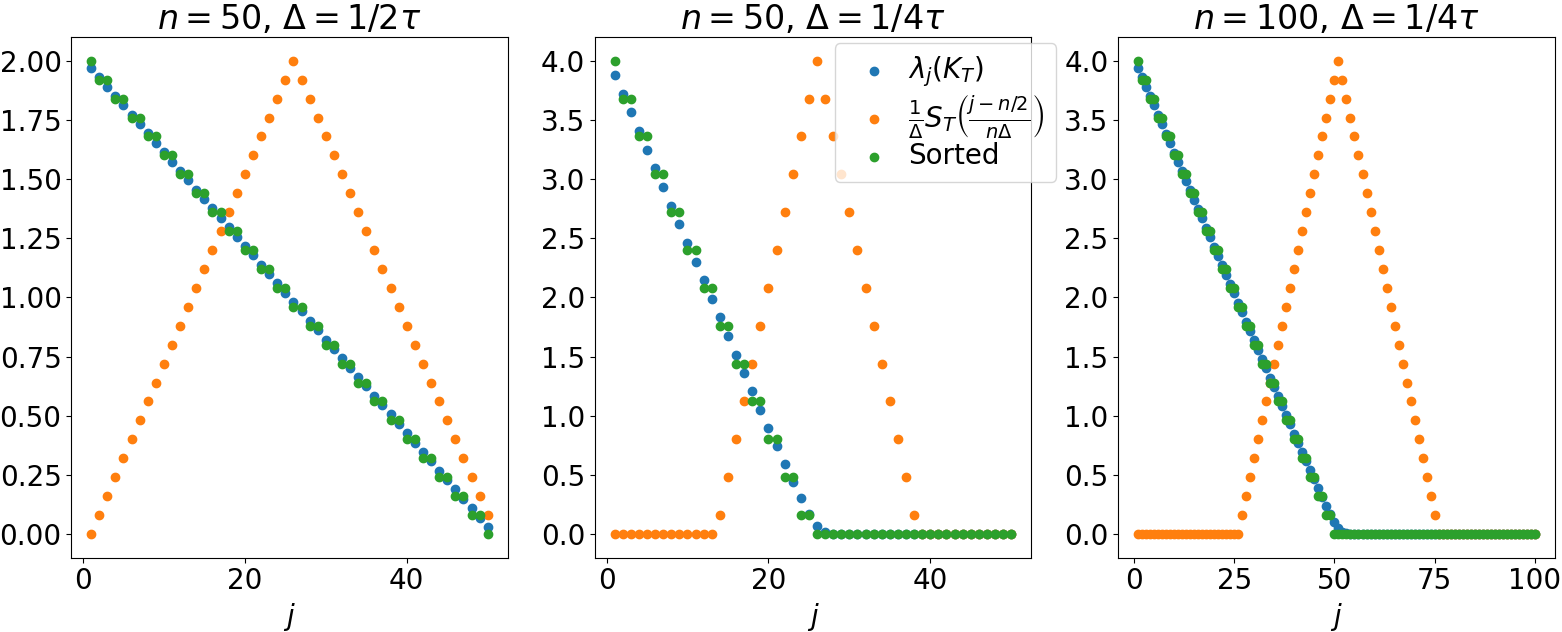}
  \caption{Spectrum of the temporal kernel matrix $\bm K_T^{(n)}$~(blue) and its approximation $\{S_T((j-n/2)/n\Delta)\}_{j \in [n]}$ provided by Proposition~\ref{prop:continuous_support_time_spectrum} with the eigenvalues sorted~(green) for different number of observations $n$, different sampling frequencies $\Delta$, with $k_T$ being an RBF kernel~(top row) and a sinc2 kernel whose spectral density is supported on $[-\tau, \tau]$~(bottom row). The unsorted spectrum approximation is in orange.}
  \label{fig:temporal_spectrum}
\end{figure*}

\subsection{Band-Limited Kernels} \label{sec:temp_spectrum_bandlimited}

These kernels exploit only a compact symmetric interval of the frequency domain, because the supports of their spectral densities are compact intervals, i.e., $\supp(S_T) = [-\tau, \tau]$ with $0 < \tau < +\infty$. We call them "band-limited" by opposition to "broadband" kernels and similarly to the well-known notion of band-limitedness in signal processing. The most popular band-limited kernel is certainly the sinc kernel~\citep{tobar2019band} which is used to fit a GP to a band-limited signal.

Proposition~\ref{prop:continuous_support_time_spectrum} also holds for band-limited kernels, as discussed in Appendix~\ref{app:spectrum_approx_continuous_spectral_density}. Consequently, all the observations made for broadband kernels in Section~\ref{sec:temp_spectrum-broadband} can also be made for band-limited kernels. Furthermore, the band-limitedness of $k_T$ can be used to derive additional properties about the spectrum. We discuss them below.

Let $\supp(S_T) = [-\tau, \tau]$. When $1/\Delta > 2\tau$, the eigenvalues of $\bm K^{(n)}_T$ sample $S_T$ on $I = [-1/{2\Delta}, 1/2\Delta]$ and clearly, $\supp(S_T) \subset I$. In general, because there are $n$ eigenvalues uniformly spread in $I$, only $n\min(1, 2\tau\Delta)$ eigenvalues sample $S_T$ in its support. Furthermore, the same reasoning can be used to show that, when $1/\Delta > 2\tau$, the aliasing error $A_n^{(i)}$ in~\eqref{eq:temporal_matrix_spectrum} vanishes for any $i \in [n]$. Therefore, in this setting, $\lambda_i\left(\bm K^{(n)}_T\right) = \frac{1}{\Delta} S_T\left(\frac{i - n/2}{n\Delta}\right) + o(1)$. This is discussed in more detail in Appendix~\ref{app:spectrum_approx_continuous_spectral_density}.

This simple reasoning shows that (i)~some eigenvalues are 0 when $1/\Delta > 2\tau$ and that (ii)~the number of positive eigenvalues in the spectrum of $\bm K^{(n)}_T$ is $n\min(1, 2\tau\Delta)$. These two observations are related to well-known notions in signal processing: (i)~$1/\Delta > 2\tau$ is precisely the Nyquist condition derived in the Nyquist sampling theorem~\citep{nyquist1928certain} and (ii)~is an instance of the time-bandwidth product~\citep{landau1961prolate}.

Observations~(i) and~(ii) are illustrated empirically in the bottom row of Figure~\ref{fig:temporal_spectrum}, generated with $k_T$ being a sinc2 kernel. The Fourier transform of a sinc2 function is the triangle function, which can be seen in orange in the bottom row of Figure~\ref{fig:temporal_spectrum}. As predicted, sampling observations above the Nyquist rate $2\tau$ (see the bottom center and bottom right panels in Figure~\ref{fig:temporal_spectrum}) yields eigenvalues that are $0$.

\subsection{Almost-Periodic Kernels}

This class includes all kernels whose spectral densities are supported on discrete sets of infinite cardinality. In other words, a kernel $k_T$ belonging to this class has a spectral density $S_T$ that is an infinite mixture of Dirac deltas, that is, $S_T(\omega) = \sum_{p \in \mathbb{Z}} \alpha_p \delta(\omega - \omega_p)$, where $\alpha_{-p} = \alpha_p$ and $\omega_{-p} = -\omega_p$ for all $p \in \mathbb{N}$ to ensure that $k_T$ is even and real. We call these kernels "almost-periodic" because they match the definition of almost-periodic functions, introduced by~\citet{bohr1926theorie}. The designation is standard in harmonic analysis. The most popular kernel in this class is undoubtedly the periodic kernel~\citep{mackay1998introduction}, which is widely used to produce a GP surrogate of a function that exactly repeats itself after some time.

Analyzing the spectrum of $\bm K^{(n)}_T$ built with an almost-periodic kernel is difficult. To simplify this analysis, we introduce the following approximation of an almost-periodic kernel.

\begin{proposition} \label{prop:low-rank_approx}
    Let $k_T$ be an almost-periodic kernel. For any $\epsilon > 0$, there exists a low-rank kernel $\tilde{k}_T^{(\epsilon)}$ such that, for any $i, j \in \mathbb{N}$,
    \begin{equation}
        \left|k_T(t_i, t_j) - \tilde{k}^{(\epsilon)}_T(t_i, t_j)\right| \leq \epsilon.
    \end{equation}
\end{proposition}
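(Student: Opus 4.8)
The plan is to exploit the atomic structure of the spectral measure of an almost-periodic kernel and simply truncate it. By Bochner's theorem, the stationary correlation $k_T$ is the inverse Fourier transform of its spectral density, so with $S_T(\omega) = \sum_{p \in \mathbb{Z}} \alpha_p \delta(\omega - \omega_p)$ (see~\eqref{eq:spectral_density}) I would first write, for any pair of times,
\[
    k_T(t_i, t_j) = \sum_{p \in \mathbb{Z}} \alpha_p e^{2\pi i (t_i - t_j)\omega_p}.
\]
Because $S_T$ is a nonnegative measure, each weight satisfies $\alpha_p \geq 0$, and because $k_T$ is a correlation function, evaluating at zero lag gives $\sum_{p \in \mathbb{Z}} \alpha_p = k_T(t_i, t_i) = 1 < \infty$. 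In particular the series converges absolutely and its tails vanish, which is the property that will drive the entire argument.

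Next I would define the candidate kernel by retaining only the finitely many atoms with $|p| \leq L$, for an integer $L = L(\epsilon)$ to be fixed later:
\[
    \tilde{k}_T^{(\epsilon)}(t_i, t_j) = \sum_{|p| \leq L} \alpha_p e^{2\pi i (t_i - t_j)\omega_p}.
\]
Using the symmetry $\alpha_{-p} = \alpha_p$ and $\omega_{-p} = -\omega_p$, this equals $\alpha_0 + 2\sum_{p = 1}^{L} \alpha_p \cos\!\left(2\pi (t_i - t_j)\omega_p\right)$, which is real and even. Expanding each cosine by the angle-difference identity exhibits the explicit finite feature map $\bm{\phi}(t) = \left(1, \cos(2\pi t \omega_1), \sin(2\pi t \omega_1), \dots, \cos(2\pi t \omega_L), \sin(2\pi t \omega_L)\right)$, so $\tilde{k}_T^{(\epsilon)}$ has rank at most $2L + 1$ and its spectral density is supported on the finite set $\{\omega_p\}_{|p| \leq L}$. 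It is therefore a low-rank kernel of the "sum of cosines" type reported in the last column of Table~\ref{tab:wrapup}.

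Finally I would bound the error uniformly in $i$ and $j$. For any pair of indices,
\[
    \left|k_T(t_i, t_j) - \tilde{k}_T^{(\epsilon)}(t_i, t_j)\right| = \left|\sum_{|p| > L} \alpha_p e^{2\pi i (t_i - t_j)\omega_p}\right| \leq \sum_{|p| > L} \alpha_p,
\]
since $|e^{2\pi i (t_i - t_j)\omega_p}| = 1$ and $\alpha_p \geq 0$. As $\sum_{p \in \mathbb{Z}} \alpha_p$ converges, the tail $\sum_{|p| > L} \alpha_p$ can be made at most $\epsilon$ by taking $L$ large enough, which proves the claim with a bound that is independent of $i$ and $j$.

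The Fourier inversion and the bookkeeping of the feature map are routine. The only genuine subtlety, and the step I would treat with care, is verifying that the truncated object is an \emph{admissible kernel} rather than a mere approximating function: keeping a $\pm p$-symmetric subfamily of atoms together with their original nonnegative weights preserves realness, evenness and positive semidefiniteness, so $\tilde{k}_T^{(\epsilon)}$ is a genuine stationary correlation kernel of finite rank. The hinge of the tail estimate is the absolute summability of $\{\alpha_p\}_{p \in \mathbb{Z}}$, which is guaranteed here by $k_T(t_i, t_i) = 1$; were the total spectral mass infinite, the uniform truncation bound would break down, so this normalization is exactly what makes the argument go through.
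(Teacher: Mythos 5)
Your proposal is correct and follows essentially the same route as the paper: write $k_T$ via its atomic spectral measure as an absolutely convergent sum of complex exponentials (the paper invokes Wiener--Khintchine, you invoke Bochner), truncate to finitely many symmetric atoms, and bound the error by the tail $\sum_{|p|>L}\alpha_p$, which vanishes because $\sum_p \alpha_p = k_T(0) = 1$. The only cosmetic difference is that the paper reorders the atoms by weight before truncating while you truncate by index, and you are somewhat more explicit than the paper in verifying that the truncation remains a genuine positive semidefinite finite-rank kernel; neither changes the substance of the argument.
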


Proposition~\ref{prop:low-rank_approx} is proven in Appendix~\ref{app:low-rank_spectrum}. It states that any almost-periodic kernel can be approximated arbitrarily well by a kernel $\tilde{k}_T$ that is low-rank, whose properties are studied in Section~\ref{sec:temp_spectrum-low_rank}.

\paragraph{Periodic Kernel with Commensurate Sampling Frequency.} The periodic kernel is by far the most popular kernel in this class. Let us briefly illustrate Proposition~\ref{prop:low-rank_approx} with a simple but important example, where $k_T$ is a periodic kernel of period $r$ and where $\Delta$ is commensurate to the period, i.e., $\Delta = r/k$ for some $k \in \mathbb{N}_+$. A low-rank kernel $\tilde{k}_T$ that perfectly interpolates the points $\left\{k_T(j\Delta)\right\}_{j \in [0, n-1]}$ is $\tilde{k}_T(j\Delta) = \sum_{i = 0}^{n-1} c_i \cos(2\pi i j / n)$, with $c_0 = \sum_{j = 0}^{n-1} k_T(j\Delta)/n$ and $c_i = 2\sum_{j = 0}^{n-1} k_T(j\Delta) \cos(2\pi i j / n)/n$ for all $1 \leq i \leq n-1$. The coefficients $c_i, 0 \leq i <n$, are obtained by taking the Discrete Cosine Transform (DCT) of the sequence $\left\{k_T(j\Delta)\right\}_{j \in [0, n-1]}$. Because $k_T$ is periodic with period $r$ and $\Delta = r/k$, a simple analysis shows that for any $n > k$, $c_0$ is positive if $k$ is odd and is $0$ if $k$ is even. Furthermore, only $\lfloor k/2 \rfloor$ coefficients among $c_1, \cdots, c_{n-1}$ are positive. In other words, the sequence $\left\{k_T(j\Delta)\right\}_{j \in [0, n-1]}$ can always be perfectly reconstructed using a sum of at most $\lfloor k/2 \rfloor$ cosines and a constant term. Proposition~\ref{prop:low-rank_spectrum}, stated in the next section, predicts that the spectrum of the temporal kernel matrix $\bm K^{(n)}_T$ built with a periodic kernel of period $r$ on observations sampled at frequency $k/r$ has at most $k$ positive eigenvalues. This is illustrated experimentally by Figure~\ref{fig:temporal_spectrum_periodic}, which shows that $\bm K^{(n)}_T$ has only 3 (resp., 6) positive eigenvalues when $\Delta = r/3$ (resp., $\Delta=r/6$).

\begin{figure*}
  \centering
  \includegraphics[height=5cm]{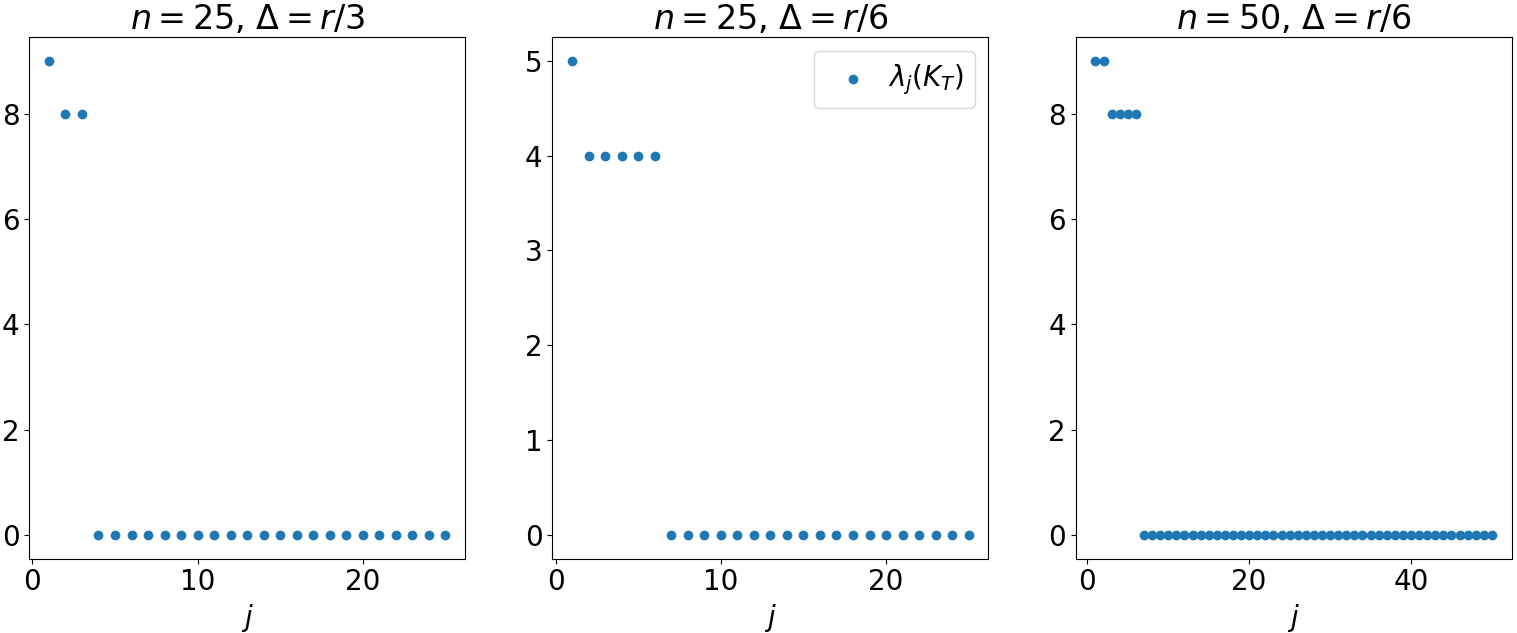}
  \caption{Spectrum of the temporal empirical kernel matrix $\bm K_T^{(n)}$ for a periodic kernel of period $r$, two commensurate sampling frequencies ($3/r$ and $6/r$) and two different numbers of observations.}
  \label{fig:temporal_spectrum_periodic}
\end{figure*}

\subsection{Low-Rank Kernels} \label{sec:temp_spectrum-low_rank}

These kernels are trigonometric polynomials, and their spectral densities are supported on discrete sets of finite cardinality. In other words, a kernel $k_T$ belonging to this class has a spectral density which is a finite mixture of Dirac deltas, that is, $S_T(\omega) = \sum_{p=-L}^L \alpha_p \delta(\omega - \omega_p)$, where $\alpha_{-p} = \alpha_p$ and $\omega_p = \omega_{-p}$ for all $p \in \left\{0, \cdots, L\right\}$, to ensure that $k_T$ is even and real. The most popular use of these kernels is definitely random features approximation (e.g., see~\citet{rahimi2007random}). The following result provides an approximation of the eigenvalues of $\bm K^{(n)}_T$ when $k_T$ is a low-rank kernel.

\begin{proposition} \label{prop:low-rank_spectrum}
    Let $\mathcal{D} = \left\{(\bm x_i, t_i, y_i)\right\}_{i \in [n]}$ be a dataset of $n$ observations where, for all $i \in [n], t_i = i \Delta$ and let $\bm K_T^{(n)} = k_T(\mathcal{D}, \mathcal{D})$. If the spectral density $S_T$ is supported on a finite discrete set, then there exist $L \in \mathbb{N}$, frequencies $\omega_1, \cdots, \omega_{L} \in \mathbb{R}$ and positive coefficients $c_0, \cdots, c_L \in [0, 1]$ such that $\sum_{j = 0}^L c_j = 1$ and $k_T(t-t') = c_0 + \sum_{j = 1}^L c_j \cos(2 \pi i \omega_j |t-t'|)$. Furthermore,
    \begin{equation} \label{eq:low-rank_spectrum}
        \lambda_j\left(\bm K^{(n)}_T\right) = \begin{cases}
            nc_0 & \text{if } j =1,\\
            \frac{n}{2} c_{\lfloor j/2 \rfloor} & \text{if } 2 \leq j \leq 2L+1,\\
            0 & \text{otherwise}.
        \end{cases}
    \end{equation}
\end{proposition}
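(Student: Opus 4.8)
The plan is to write the kernel matrix explicitly as a short outer-product sum, read off its rank to obtain the trailing zero eigenvalues, and then pin down the finitely many nonzero eigenvalues by reducing the problem to a fixed-size Gram matrix that becomes diagonal as $n \to \infty$. First I would invert the Fourier transform \eqref{eq:spectral_density}. Since $S_T$ is the finite symmetric sum of Diracs $\sum_{p=-L}^L \alpha_p \delta(\omega - \omega_p)$ and $k_T$ is a positive-definite correlation function, Bochner's theorem forces $\alpha_p \geq 0$, while the symmetry $\alpha_{-p} = \alpha_p$, $\omega_{-p} = -\omega_p$ collapses the complex exponentials into cosines. This yields the representation $k_T(t - t') = c_0 + \sum_{j=1}^L c_j \cos(2\pi\omega_j (t-t'))$ with $c_0 = \alpha_0$, $c_j = 2\alpha_j \geq 0$, and $c_0 + \sum_{j=1}^{L} c_j = k_T(0) = 1$, which is exactly the claimed normalization.

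Next, using $t_a = a\Delta$ together with the angle-addition identity $\cos(2\pi\omega_j(t_a - t_b)) = \cos(2\pi\omega_j t_a)\cos(2\pi\omega_j t_b) + \sin(2\pi\omega_j t_a)\sin(2\pi\omega_j t_b)$, I would write $\bm K^{(n)}_T = \bm V \bm D \bm V^\top$, where the columns of the $n\times(2L+1)$ matrix $\bm V$ are the sampled waveforms $\bm 1$, $(\cos(2\pi\omega_j a\Delta))_a$, $(\sin(2\pi\omega_j a\Delta))_a$, and $\bm D = \mathrm{diag}(c_0, c_1, c_1, \ldots, c_L, c_L)$. Because $\bm V$ has only $2L+1$ columns, $\bm K^{(n)}_T$ has rank at most $2L+1$, which immediately gives the last case $\lambda_j \approx 0$ for $j > 2L+1$.

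For the nonzero eigenvalues, I would exploit that $\bm V\bm D\bm V^\top = (\bm V\bm D^{1/2})(\bm V\bm D^{1/2})^\top$ shares its nonzero spectrum with the symmetric $(2L+1)\times(2L+1)$ matrix $M_n := \bm D^{1/2}(\bm V^\top\bm V)\bm D^{1/2}$, and then compute $\tfrac1n\bm V^\top\bm V$ entrywise. The discrete orthogonality of sampled sinusoids, governed by $\tfrac1n\sum_{a=1}^n e^{2\pi i\theta a} \to \mathbf 1\{\theta \in \mathbb Z\}$, drives every off-diagonal inner product to $0$ and each diagonal one to its mean-square value, so $\tfrac1n\bm V^\top\bm V \to \Gamma := \mathrm{diag}(1, \tfrac12, \ldots, \tfrac12)$. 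Hence $\tfrac1n M_n \to \bm D^{1/2}\Gamma\bm D^{1/2}$, which is already diagonal with entries $c_0, \tfrac12 c_1, \tfrac12 c_1, \ldots, \tfrac12 c_L, \tfrac12 c_L$; since the eigenvalues of a fixed-size matrix depend continuously on its entries, $\tfrac1n\lambda_j(\bm K^{(n)}_T)$ converges to the sorted diagonal, giving $\lambda_1 \approx nc_0$ and $\lambda_j \approx \tfrac n2 c_{\lfloor j/2\rfloor}$ for $2 \leq j \leq 2L+1$ (after relabeling the frequencies so the $c_j$ are non-increasing), with equality in the limit.

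The main obstacle is the asymptotic-orthogonality step and the non-resonance conditions it silently requires. The limit $\tfrac1n\sum_a e^{2\pi i\theta a}\to 0$ fails exactly when $\theta\in\mathbb Z$: if some $\omega_j\Delta$ is an integer, the corresponding cosine column degenerates into a multiple of $\bm 1$ and its sine column vanishes, and if two frequencies alias under sampling ($(\omega_j\pm\omega_k)\Delta\in\mathbb Z$) the relevant cross terms survive. I would handle this by folding any integer-resonant frequency into the constant term and assuming the sampling is non-degenerate (generic $\Delta$, frequencies distinct modulo $1/\Delta$), the natural analogue of the Nyquist/non-aliasing condition already used for band-limited kernels. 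The remaining work is quantitative: the entrywise error $\tfrac1n\bm V^\top\bm V - \Gamma$ is $\mathcal{O}(1/n)$ by bounding the geometric sums, so by Weyl's perturbation inequality the eigenvalue error on the normalized matrix is $\mathcal{O}(1/n)$ and thus $o(n)$ after rescaling, which is precisely the sense in which equality holds as $n\to\infty$.
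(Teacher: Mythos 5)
Your proof is correct and arrives at the same spectrum, but by a genuinely different route. The paper stays at the operator level: it writes $k_T$ as a finite sum of complex exponentials, asserts that $c_0 \bar{\phi}_0\bar{\phi}_0^* + \tfrac12\sum_{j=1}^{L} c_{\lfloor j/2\rfloor}\bar{\phi}_j\bar{\phi}_j^*$ with $\bar{\phi}_j(t)=e^{2\pi i \omega_j t}$ is a (complex) Mercer decomposition of $\Sigma_{k_T}$, reads the operator eigenvalues off the coefficients, and only then passes to the matrix via the general correspondence $\lambda_i(\bm K^{(n)}) \approx n\bar{\lambda}_i$ of \eqref{eq:spectrum_covop_covmat}. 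You work directly on the matrix: the factorization $\bm K^{(n)}_T = \bm V\bm D\bm V^\top$ gives the rank bound, hence the trailing zero eigenvalues, for free; the reduction to the fixed-size Gram matrix $\bm D^{1/2}\bm V^\top\bm V\bm D^{1/2}$ plus discrete orthogonality of sampled sinusoids pins down the $2L+1$ nonzero eigenvalues; and Weyl's inequality turns the $\mathcal{O}(1/n)$ entrywise error into a rate for the eigenvalues, which the paper does not provide. Your approach also makes explicit a caveat the paper leaves silent: the Mercer step implicitly requires the exponentials to be orthonormal eigenfunctions, which is precisely the continuous-time analogue of the non-resonance/non-aliasing condition you isolate ($\omega_j\Delta\notin\mathbb{Z}$ and frequencies distinct modulo $1/\Delta$); when sampling aliases two frequencies, the rank and multiplicities genuinely change and \eqref{eq:low-rank_spectrum} must be amended, so flagging this is a real improvement rather than pedantry. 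What the paper's route buys in exchange is uniformity with the rest of its pipeline, since everything downstream (in particular the proof of Theorem~\ref{thm:regret_guarantees}) consumes the result in the language of $\Sigma_k$ and \eqref{eq:spectrum_covop_covmat}.
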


Proposition~\ref{prop:low-rank_spectrum} is proven in Appendix~\ref{app:low-rank_spectrum}. It states that low-rank kernels whose spectral density is a mixture of $2L+1$ Dirac deltas produce temporal covariance matrices $\bm K^{(n)}_T$ with at most $2L + 1$ non-zero eigenvalues. This is the reason why we call these kernels "low-rank". 

\section{Regret Bounds for TVBO} \label{sec:asymptotical_guarantees}

In Section~\ref{sec:temp_spectrum}, we have studied the spectrum of $\bm K_T^{(n)}$ for all four popular classes of temporal kernels. We now use these results to provide two-sided bounds for the cumulative regret of TVBO algorithms. Our main results are summarized in Table~\ref{tab:wrapup}.

\begin{theorem} \label{thm:lower_regret_bound}
    Let $R_n = \sum_{i=1}^{n} f(\bm x^*_i, t_{i}) - f(\bm x_i, t_{i})$ be the cumulative regret at time $t_n$ incurred by an arbitrary TVBO algorithm that samples observations at frequency $1/\Delta$. Let $k_T$ be a broadband or band-limited kernel with spectral density $S_T$. Then, $\mathbb{E}\left[R_n\right] \in \Theta\left(n\right)$.
\end{theorem}

Theorem~\ref{thm:lower_regret_bound} is proven in Appendix~\ref{app:lower_bound}. In the proof, we bound the immediate regret $r_n$ of any TVBO algorithm from below by the immediate regret $\tilde{r}_n$ of an oracle able to observe the entire noiseless objective $f(\cdot, t_n)$ at time $t_n$. We show that $\mathbb{E}[\tilde{r}_n]$ can be computed using $k_T$ and its corresponding covariance matrix $\bm K^{(n)}_T$. Then, we use Proposition~\ref{prop:continuous_support_time_spectrum} to relate $\tilde{r}_n$ to $S_T$, the spectral density of $k_T$ and prove that $\lim_{n \to \infty} \mathbb{E}\left[\tilde r_n\right] > 0$. This leads to important insights on the achievable performance of TVBO algorithms. We discuss them below.

First, the spectral density associated with the exponential kernel $k_T(t, t') = \exp(-|t -t'| / l)$ is supported on $\mathbb{R}$, therefore Theorem~\ref{thm:lower_regret_bound} applies and we recover the same linear scaling presented in~\citet{bogunovic2016time}. Furthermore, the implications of Theorem~\ref{thm:lower_regret_bound} extend far beyond the exponential kernel because this result applies to \textit{every} covariance function $k_T$ whose spectral density $S_T$ is supported on an interval (possibly $\mathbb{R}$ or a compact interval like $[-\tau, \tau]$, for $0 < \tau < \infty$). This holds regardless of the observation sampling frequency $1/\Delta$, as long as it is finite. Therefore, Theorem~\ref{thm:lower_regret_bound}~shows that it is hopeless for a broadband kernel (e.g., RBF, Matérn, Rational Quadratic) or a band-limited kernel (e.g., sinc) to incur a sublinear regret in a time-varying setting.

Second, the case of band-limited temporal kernels (i.e., kernels whose spectral densities are supported on $[-\tau, \tau]$) is particularly interesting. Although the Nyquist condition $1/\Delta > 2\tau$ shows up when approximating the spectrum of temporal kernel matrices built with a band-limited $k_T$ (see Section~\ref{sec:temp_spectrum_bandlimited}), band-limitedness is not enough for the cumulative regret of the oracle to scale sublinearly with the number of iterations $n$. In fact, after $n$ iterations, the oracle would have observed $\{f(\bm x, i\Delta)\}_{\bm x \in \mathcal{S}, i \in [n]}$.\footnote{Recall that $t_i = i \Delta$ for any $i \in \mathbb{N}$ as per Assumption~\ref{ass:sampling_freq}.} In this setting, the Nyquist sampling theorem guarantees a perfect reconstruction of $f(\cdot, t)$ for all $t \in [\Delta, n\Delta]$ if $1/\Delta > 2\tau$~\citep{tobar2019band}. However, after $n$ iterations, the oracle acquires a new observation based on its posterior about $f(\cdot, (n+1)\Delta)$, which cannot be perfectly reconstructed from the collected observations. Intuitively, even when $k_T$ is band-limited, the oracle always learns something new when it collects a new observation. Therefore, its cumulative regret unavoidably scales linearly.

Theorem~\ref{thm:lower_regret_bound} applies only to broadband and band-limited temporal kernels. For almost-periodic and low-rank kernels, we derive another regret bound below.

\begin{theorem} \label{thm:upper_regret_bound}
    Let $R_n = \sum_{i=1}^{n} f(\bm x^*_i, t_{i}) - f(\bm x_i, t_{i})$ be the cumulative regret incurred by GP-UCB up to time $t_n$, where $\bm x^*_i = \argmax_{\bm x \in \mathcal{S}} f(\bm x, t_{i})$. Then, if $k_T$ is an almost-periodic or a low-rank kernel, $R_n \in o(n)$ with high probability.
\end{theorem}

Theorem~\ref{thm:upper_regret_bound} is proven in Appendix~\ref{app:upper_bound_spectral}, following proof techniques introduced by~\citet{gpucb, bogunovic2016time}. As in~\citet{gpucb}, we derive an upper bound that features the mutual information $I(\bm f_n, \bm y_n) = \sum_{i = 1}^n \log(1 + \sigma^{-2}_0 \lambda_i(\bm K^{(n)}))$, where $\bm f_n = \left(f(\bm x_1), \cdots, f(\bm x_n)\right)$ and $\bm y_n = \left(f(\bm x_1) + \epsilon, \cdots, f(\bm x_n)+\epsilon\right)$. Then, we show that $I(\bm f_n, \bm y_n) \in o(n)$ when $k_T$ is an almost-periodic or a low-rank kernel, which immediately implies $R_n \in o(n)$. These findings are experimentally verified with Figure~\ref{fig:average_mutual_information}, where it is clear that $I(\bm f_n, \bm y_n)/n$ decreases w.r.t.\ $n$ when $k_T$ is an almost-periodic or a low-rank kernel. For the sake of completeness, the plot also shows that $I(\bm f_n, \bm y_n)/n$ is constant w.r.t.\ $n$ when $k_T$ is a broadband or band-limited kernel. This offers a confirmation that, when adapted to the time-varying setting, classical mutual information-based upper regret bounds~\citep{gpucb, valko2013finite, scarlett2017lower, whitehouse2023sublinear} are in $\mathcal{O}(n)$ when $k_T$ is a broadband or a band-limited kernel.

\begin{figure}
  \centering
  \includegraphics[height=5cm]{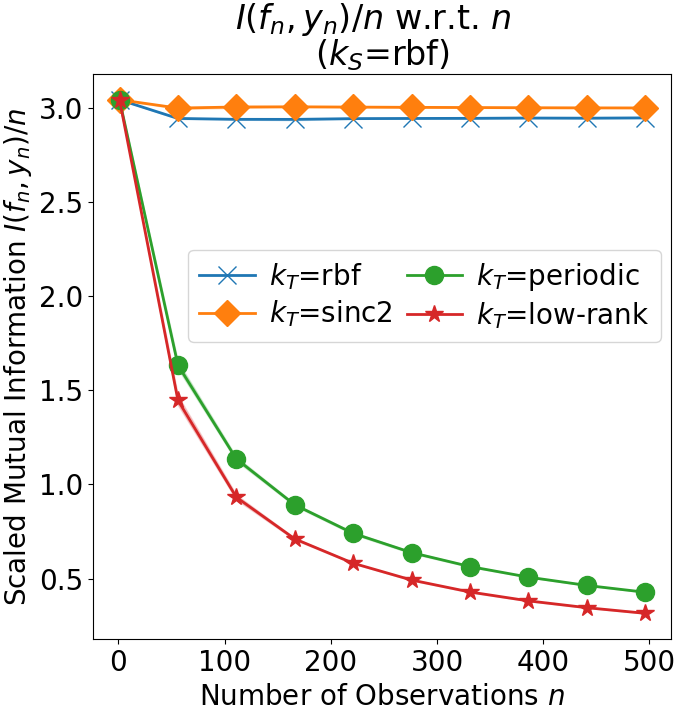}
  \caption{Mutual information $I(\bm f_n, \bm y_n)$ scaled by $n$ w.r.t.\ $n$ for four different temporal kernels, namely an RBF kernel~(blue crosses), a sinc2 kernel~(orange diamonds), a periodic kernel~(green circles) and a low-rank kernel~(red stars). The spatial components of observations are collected in $\mathcal{S} = [0, 1]^d$ while the temporal components follow Assumption~\ref{ass:sampling_freq}. The results are averaged over 10 independent replications and standard error intervals are plotted as shaded areas around the solid lines.}
  \label{fig:average_mutual_information}
\end{figure}

To the best of our knowledge, Theorem~\ref{thm:upper_regret_bound} is the first result to show sufficient conditions for a TVBO algorithm to have the no-regret property in the Bayesian setting. However, note that these sufficient conditions are rarely met in practice, since a GP with an almost-periodic or low-rank temporal kernel should be an adequate surrogate model for the black-box objective function $f$.

\section{Conclusion} \label{sec:conclusion}

This paper solves an important theoretical question about the asymptotic performance of TVBO algorithms opened almost ten years ago with the first derivation of an algorithm-independent lower regret bound in~\citet{bogunovic2016time}. Under mild assumptions (see Section~\ref{sec:background-assumptions}) and for the most popular classes of stationary temporal kernels (see Section~\ref{sec:temp_spectrum}), we have provided an upper regret bound (Theorem~\ref{thm:upper_regret_bound}) and an algorithm-independent lower regret bound (Theorem~\ref{thm:lower_regret_bound}) on the cumulative regret of TVBO algorithms. We have established several important insights: (i)~the key role played by the support of the spectral density associated with the temporal kernel $k_T$, (ii)~the no-regret performance of GP-UCB on objectives modeled by almost-periodic or low-rank temporal kernels, (iii)~the impossibility to achieve no-regret performance on objectives modeled by broadband or band-limited temporal kernels and (iv)~an interesting connection between band-limited temporal kernels and the Nyquist sampling theorem. Table~\ref{tab:wrapup} summarizes these insights. Finally, we have illustrated each important theoretical result experimentally (see Figures~\ref{fig:spatial_temporal_spatiotemporal_spectrum_rbf_rbf}-\ref{fig:average_mutual_information}).

This work also opens up new research questions. How does the cumulative regret $R_n$ scale when $k_T$ is a combination of temporal kernels that belong to different classes (e.g., a low-rank kernel and a band-limited kernel)? What is the asymptotic performance of TVBO algorithms for more complex spatio-temporal covariance structures (e.g., not following Assumption~\ref{ass:covariance})? How does $R_n$ scale when observations are not sampled at a fixed sampling frequency (i.e., when Assumption~\ref{ass:sampling_freq} is relaxed)? These questions have both theoretical and practical interest. As an example, there are numerous applications in which a new observation is sampled only after performing GP inference. As the complexity of GP inference is in $\mathcal{O}(n^3)$, observations may not be collected at a fixed sample frequency~\citep{bardou2024too}, and studying $R_n$ without Assumption~\ref{ass:sampling_freq} appears to be crucial for improving TVBO algorithms in practice. Addressing these questions would deepen our understanding of TVBO algorithms and lead to significant improvements in their empirical performance. The tools and insights provided by this paper will likely help the TVBO community to come up with answers to these important questions.


\bibliography{main}
\bibliographystyle{iclr2026_conference}

\clearpage
\appendix
\thispagestyle{empty}

\onecolumn
\aistatstitle{Asymptotic Performance of Time-Varying Bayesian Optimization: \\
Supplementary Materials}

\section{Integral Covariance Operators} \label{app:covariance}

\subsection{Background on Integral Covariance Operators}

A positive definite kernel $k$ is associated with an integral covariance operator $\Sigma_k : L^2\left(\mathcal{X}\right) \to L^2\left(\mathcal{X}\right)$ with respect to a probability measure $\mu$, where $L^2\left(\mathcal{X}\right)$ denotes the space of $L^2$-integrable functions from the compact $\mathcal{X}$ to $\mathbb{R}$, which is defined as
\begin{equation*}
    \Sigma_k(f)(\bm x) = \int_{\mathcal{X}} k(\bm x, \bm u) f(\bm u) d\mu(\bm u).
\end{equation*}

This operator is Hilbert-Schmidt, compact, self-adjoint and positive. As such, $\Sigma_k$ has a countable infinity of eigenfunctions $\phi_i \in L^2(\mathcal{X})$ and associated nonnegative eigenvalues $\lambda_i(\Sigma_k) \in \mathbb{R}_{\geq0}$ verifying
\begin{equation*}
    \Sigma_k(\phi_i)(\bm x) = \int_{\mathcal{X}} k(\bm x, \bm u) \phi_i(\bm u) d\mu(\bm u) = \lambda_i(\Sigma_k) \phi_i(\bm x).
\end{equation*}

The eigenfunctions are an orthonormal basis of $L^2\left(\mathcal{X}\right)$. In particular, this means that
\begin{equation} \label{eq:eigen_orthonormality}
    \forall i, j \in \mathbb{N}, \int_{\mathcal{X}} \phi_i(\bm x) \phi_j(\bm x) d\mu(\bm x) = \delta_{ij},
\end{equation}
where $\delta_{ij}$ is the Kronecker delta whose value is $1$ if $i = j$ and is $0$ otherwise.

The operator $\Sigma_k$ also admits an inverse $\Sigma_k^{-1} = \Sigma_{k^{-1}}$ associated with an inverse covariance function $k^{-1} \in L^2\left(\mathcal{X}\right)$ such that
\begin{align*}
    \Sigma_k\left(\Sigma_{k^{-1}}(f)\right)(\bm x) &= \int_\mathcal{X} k(\bm x, \bm u) \Sigma_{k^{-1}}(f)(\bm u) d\mu(\bm u)\\
    &= \int_\mathcal{X} \int_\mathcal{X} k(\bm x, \bm u) k^{-1}(\bm u, \bm v) f(\bm v) d\mu(\bm u) d\mu(\bm v)\\
    &= f(\bm x).
\end{align*}

Such an inverse $\Sigma_{k^{-1}}$ has the same eigenvectors $\left\{\phi_i\right\}_{i \in \mathbb{N}}$ as $\Sigma_k$, but inverse eigenvalues $\left\{1 / \lambda_i(\Sigma_k)\right\}_{i \in \mathbb{N}, \lambda_i(\Sigma_k) > 0}$.

\subsection{Mercer Representation of $k$}

A positive definite, symmetric kernel $k$ defined on a compact space $\mathcal{X}$ can be expanded as
\begin{equation} \label{eq:mercer_rep}
    k(\bm x, \bm x') = \sum_{i = 1}^{\infty} \lambda_i(\Sigma_k) \phi_i(\bm x) \phi_i(\bm x')
\end{equation}
where $\lambda_i(\Sigma_k)$ is the $i$-th eigenvalue of the integral covariance operator $\Sigma_k$ with respect to~(w.r.t.)\ the probability measure $\mu$ on $\mathcal{X}$, and $\phi_i$ the associated eigenfunction. The form~\eqref{eq:mercer_rep} is called the Mercer representation of the kernel $k$~\citep{mercer1909xvi}.

As an illustrative example, let us derive the Mercer representation on the compact domain $\mathcal{S} \times \mathcal{T}_n$, where $\mathcal{T}_n = \left\{\Delta, \cdots, n\Delta\right\}$, of a separable covariance function $k$ satisfying Assumption~\ref{ass:covariance}. Because $k_S$ and $k_T$ are positive definite, their respective Mercer representations are
\begin{align}
    k_S(\bm x, \bm x') &= \sum_{i = 1}^\infty \lambda^S_i \phi_i^S(\bm x) \phi_i^S(\bm x'), \quad \forall \bm x, \bm x' \in \mathcal{S}, \label{eq:mercer_space}\\
    k_T(t, t') &= \sum_{i = 1}^n \lambda^T_i \phi_i^T(t) \phi_i^T(t'), \quad \forall t, t' \in \mathcal{T}_n, \label{eq:mercer_time}
\end{align}
where $\lambda^S_i$ (resp.,~$\lambda^T_i$) is the $i$-th eigenvalue of the integral covariance operator $\Sigma_{k_S}$ (resp., $\Sigma_{k_T}$) and $\phi_i^S$ (resp.,~$\phi_i^T$) its associated eigenfunction. Note that the Mercer decomposition of $k_T$ in~\eqref{eq:mercer_time} is a finite sum because the integral operator on $\mathcal{T}_n$ has a matrix representation of rank at most $n$.

Because $k$ is separable (see Assumption~\ref{ass:covariance}), we have that for all $(\bm x, t), (\bm x', t') \in \mathcal{S} \times \mathcal{T}_n$,
\begin{align}
    k((\bm x, t), (\bm x', t')) &= k_S(\bm x, \bm x') k_T(t, t') \nonumber\\
    &= \sum_{i = 1}^\infty \sum_{j = 1}^n \lambda_i^S \lambda_j^T \phi_i^S(\bm x) \phi_i^S(\bm x') \phi_j^T(t) \phi_j^T(t'). \label{eq:mercer_kernel}
\end{align}

The Mercer representation of the kernel $k^{-1}$, associated with $\Sigma_{k^{-1}}$ the inverse of the covariance operator $\Sigma_k$, can be easily inferred from~\eqref{eq:mercer_kernel}:
\begin{equation}
    k^{-1}((\bm x, t), (\bm x', t')) = \sum_{i = 1}^\infty \sum_{j = 1}^n \frac{1}{\lambda_i^S \lambda_j^T} \phi_i^S(\bm x) \phi_i^S(\bm x') \phi_j^T(t) \phi_j^T(t'). \label{eq:mercer_invkernel}
\end{equation}

The representations~\eqref{eq:mercer_kernel} and~\eqref{eq:mercer_invkernel} will be frequently used in the proof of Theorem~\ref{thm:lower_regret_bound}, provided in Appendix~\ref{app:lower_bound}.

\section{Building the Covariance Operator Spectrum} \label{app:covariance_spectrum}

In this section, we discuss how to build the spectrum of the covariance operator $\Sigma_k$ associated with the spatio-temporal kernel $k$ by proving Proposition~\ref{prop:operator_spectrum}.

\begin{proof}
    On the compact domain $\mathcal{S} \times \mathcal{T}_n$, where $\mathcal{T}_n = \left\{\Delta, \cdots, n \Delta\right\}$, we have, as discussed in Appendix~\ref{app:covariance},
    \begin{align}
        k((\bm x, t), (\bm x', t')) &= k_S(\bm x, \bm x') k_T(t, t') \nonumber\\
        &= \sum_{i = 1}^\infty \lambda^S_i \phi^S_i(\bm x) \phi^S_i(\bm x') \sum_{j = 1}^n \lambda^T_j \phi^T_j(t) \phi^T_j(t') \label{eq:proof_mercer_decomposition_space_time}\\
        &= \sum_{i = 1}^\infty \sum_{j = 1}^n \underbrace{\lambda^S_i \lambda^T_j}_{\text{Eigenvalue } \lambda_l \text{ }} \underbrace{\phi^S_i(\bm x) \phi^T_j(t)}_{\text{ Eigenfunction } \phi_l(\bm x, t)} \phi^S_i(\bm x') \phi^T_j(t'), \label{eq:proof_mercer_decomposition_spatiotemporal}
    \end{align}
    where~\eqref{eq:proof_mercer_decomposition_space_time} uses the Mercer decompositions of $k_S$ and $k_T$ and~\eqref{eq:proof_mercer_decomposition_spatiotemporal} is a simple reordering of the terms to match the form of a Mercer decomposition.
    
    It appears clearly that any of the eigenvalues $\left\{\lambda_l\right\}_{l \in \mathbb{N}}$ of the covariance operator $\Sigma_k$ can be built by computing the product of an eigenvalue of the spatial covariance operator $\Sigma_{k_S}$ and an eigenvalue of the temporal covariance operator $\Sigma_{k_T}$. Therefore, to build the sequence of eigenvalues sorted in descending order, $\lambda_l$ should be the $l$-th largest value in the set $\left\{\lambda_i^S \lambda_j^T : i,j \in \mathbb{N}\right\}$. This is ensured by introducing the sequences $\left(i_l\right)_{l \in \mathbb{N}}$ and $\left(j_l\right)_{l \in \mathbb{N}}$ such that $\lambda_l = \lambda_{i_l}^S \lambda^T_{j_l}$. Such sequences always exist since the spectrum of $\Sigma_k$ can always be sorted.
\end{proof}

For the sake of completeness, we also describe in detail the approximation of the spectrum of $\bm K^{(n)}$ used in Figure~\ref{fig:spatial_temporal_spatiotemporal_spectrum_rbf_rbf}, that is,
\begin{equation*}
    \lambda_l\left(\bm K^{(n)}\right) = \frac{1}{n} \lambda_{i_l}\left(\bm K^{(n)}_S\right) \lambda_{j_l}\left(\bm K^{(n)}_T\right) + \mathcal{O}(n^{1/2}).
\end{equation*}

The approximation relies on the fact that, for a set of $n$ i.i.d. observations, $\lambda_i(\bm K^{(n)}) / n = \lambda_i(\Sigma_k) + \mathcal{O}(n^{-1/2})$~\citep{rosasco2010learning}.

\begin{proof}
    The identity $\lambda_i(\bm K^{(n)}) / n = \lambda_i(\Sigma_k) + \mathcal{O}(n^{-1/2})$ leads to the equivalent identity $\lambda_i(\bm K^{(n)}) = n\lambda_i(\Sigma_k) + \mathcal{O}(n^{1/2})$. Therefore,
    \begin{align}
        \lambda_l\left(\bm K^{(n)}\right) &= n \lambda_l(\Sigma_k) + \mathcal{O}(n^{1/2}) \nonumber\\
        &= n \lambda_{i_l}^S \lambda_{j_l}^T + \mathcal{O}(n^{1/2}) \label{eq:proof_use_prop_spectrum}\\
        &= n \frac{1}{n}\lambda_{i_l}\left(\bm K^{(n)}_S\right) \frac{1}{n} \lambda_{j_l}\left(\bm K^{(n)}_T\right) + \mathcal{O}(n^{1/2}) \nonumber\\
        &= \frac{1}{n} \lambda_{i_l}\left(\bm K^{(n)}_S\right) \lambda_{j_l}\left(\bm K^{(n)}_T\right) + \mathcal{O}(n^{1/2}), \nonumber
    \end{align}
    where~\eqref{eq:proof_use_prop_spectrum} is a direct application of Proposition~\ref{prop:operator_spectrum}.
\end{proof}

\section{Temporal Matrix Spectrum Approximation for Broadband and Band-Limited Kernels} \label{app:spectrum_approx_continuous_spectral_density}

In this appendix, we prove Proposition~\ref{prop:continuous_support_time_spectrum}. Before diving into the proof, let us start with a simple observation on $\bm K_T^{(n)} = k_T(\mathcal{D}, \mathcal{D})$ and some useful background. We have
\begin{align}
    \left(\bm K_T^{(n)}\right)_{ij} &= k_T(t_i, t_j) \nonumber\\
    &= k_T(|t_i - t_j|) \label{eq:proof_kT_stationary}\\
    &= k_T(\Delta|i-j|) \label{eq:proof_KT_toeplitz}
\end{align}
where~\eqref{eq:proof_kT_stationary} holds because $k_T$ is stationary and even and where~\eqref{eq:proof_KT_toeplitz} holds because $t_i = i\Delta$, as per Assumption~\ref{ass:sampling_freq}.

The property~\eqref{eq:proof_KT_toeplitz} is specific to symmetric Toeplitz (i.e., diagonally-constant) matrices, which are entirely characterized by their first row. Unfortunately, some of its properties (e.g., its spectral properties) remain difficult to study in the general case. In the following, we provide some background on common techniques for approximating the spectrum of Toeplitz matrices. For more details on these notions, please refer to~\citet{gray2006toeplitz}.

\subsection{Background on Toeplitz Matrices and Circulant Embeddings}

In the following, we assume $n$ even for notational convenience and pick a shift $n/2$ for centering our frequency grid. If $n$ is odd, the formulas hold for a shift $(n-1)/2 = \lfloor n/2 \rfloor$.

A common special case of symmetric Toeplitz matrices is called a symmetric circulant matrix. Its distinctive property is that each of its rows is formed by a right-shift of the previous one:
\begin{equation*}
    \bm C^{(n)} = \begin{pmatrix}
        c_0 & c_1 & \cdots & c_{n-1}\\
        c_{n-1} & c_0 & \cdots & c_{n-2}\\
        \vdots & \ddots & \ddots & \vdots\\
        c_1 & c_2 & \cdots & c_0
    \end{pmatrix}
\end{equation*}
where $c_i = c_{n-i}, \forall i \in \left\{0, \cdots, n-1\right\}$ to ensure symmetry.

A symmetric circulant matrix is also entirely characterized by its first row $\left(c_0, \cdots, c_{n-1}\right)$ and is simpler to study than a general symmetric Toeplitz matrix. In particular, all symmetric circulant matrices share the same eigenvectors $\{\bm \phi_0, \cdots, \bm \phi_{n-1}\}$, where the $j$-th eigenvector is
\begin{equation} \label{eq:eigenvector_circulant}
    \bm \phi_j = \left(\frac{1}{\sqrt{n}} e^{\frac{-2\pi i (j - n/2) l}{n}}\right)_{l \in \left\{0, \cdots, n-1\right\}} = \frac{1}{\sqrt{n}} \left(1, e^{\frac{-2\pi i (j - n/2)}{n}}, e^{\frac{-4\pi i (j-n/2)}{n}}, \cdots, e^{\frac{-2 (n-1)\pi i (j-n/2)}{n}}\right),
\end{equation}
for all $j = 0, \cdots, n-1$.

The $n \times n$ matrix $\bm Q^{(n)}$ whose columns are the normalized eigenvectors $\{\bm \phi_j\}_{0 \leq j \leq n-1}$, i.e., $\bm Q^{(n)} = \left(\bm \phi_0, \cdots, \bm \phi_{n-1}\right)$, is an orthonormal matrix. Both the set of its columns and the set of its lines form an orthonormal set. Recall that a set of elements $\left\{\bm v_j\right\}_{0\leq j \leq n-1}$ from a vector space equipped with the dot product $\langle \cdot, \cdot \rangle$ is orthonormal when, for any $j,k \in \left\{0, \cdots, n-1\right\}$,
\begin{equation*}
    \langle \bm v_j, \bm v_k \rangle = \delta_{jk},
\end{equation*}
where $\delta_{jk}$ is the Kronecker delta with value $1$ if $j = k$ and is $0$ otherwise. 

Along with any eigenvector $\bm \phi_j$ comes its associated eigenvalue $\lambda_j$. For a symmetric circulant matrix, $\lambda_j$ is a coefficient from the centered discrete Fourier transform of the first row of $\bm C^{(n)}$
\begin{equation} \label{eq:general_eigenvalue_circulant}
    \lambda_j = \sum_{l = 0}^{n-1} c_l e^{\frac{-2\pi i (j-n/2) l}{n}}.
\end{equation}

It is possible to build an equivalence relation between sequences of matrices of growing sizes~\citep{gray2006toeplitz}. In particular, two sequences of matrices $\left\{\bm A^{(n)}\right\}_{n \in \mathbb{N}}$ and $\left\{\bm B^{(n)}\right\}_{n \in \mathbb{N}}$ are \textit{asymptotically equivalent}, denoted $\bm A^{(n)} \sim \bm B^{(n)}$, if
\begin{enumerate}[label=(\roman*)]
    \item $\bm A^{(n)}$ and $\bm B^{(n)}$ are uniformly upper bounded in operator norm $||\cdot||_{\text{op}}$, that is, $||\bm A^{(n)}||_{\text{op}}, ||\bm B^{(n)}||_{\text{op}} \leq M < \infty$, for any $n = 1, 2, \dots$,
    \item $\bm A^{(n)} - \bm B^{(n)} = \bm D^{(n)}$ goes to zero in the Hilbert-Schmidt norm $\|\cdot\|_{\text{HS}}$ as $n \to \infty$, that is, $\lim_{n \to \infty} \|\bm D^{(n)}\|_{\text{HS}} = 0$.
\end{enumerate}

Asymptotic equivalence is particularly useful, mainly because of the guarantees it provides on the spectrum of asymptotically equivalent sequences of Hermitian matrices. In fact, if $\left\{\bm A^{(n)}\right\}_{n \in \mathbb{N}}$ and $\left\{\bm B^{(n)}\right\}_{n \in \mathbb{N}}$ are sequences of Hermitian matrices and if $\bm A^{(n)} \sim \bm B^{(n)}$, then it is known that the spectrum of $\bm A^{(n)}$ and the spectrum of $\bm B^{(n)}$ are asymptotically absolutely equally distributed~\citep{gray2006toeplitz}.

Consequently, asymptotic equivalence drastically simplifies the study of symmetric Toeplitz matrices as their sizes go to infinity. In fact, given any symmetric Toeplitz matrix $\bm T^{(n)}$ with first row $\left(r_0, \cdots, r_{n-1}\right)$, the circulant matrix $\bm C^{(n)}$ with first row $\left(c_0, \cdots, c_{n-1}\right)$ where for all $j \in \left\{0, \cdots, n-1\right\}$,
\begin{equation*}
    c_j = \begin{cases}
        r_0 &\text{if $j = 0$}\\
        r_j + r_{n - j} &\text{otherwise}
    \end{cases}
\end{equation*}
is asymptotically equivalent to $\bm T^{(n)}$, that is, we have $\bm T^{(n)} \sim \bm C^{(n)}$.

\subsection{Proof of Proposition~\ref{prop:continuous_support_time_spectrum}}

Let us start with the following lemma.

\begin{lemma} \label{lem:kT_vanishes_asympt}
    If $k_T$ is a broadband or a band-limited kernel, then
    \begin{equation} \label{eq:kT_vanishes_asympt}
        \lim_{t \to +\infty} k_T(t) = 0.
    \end{equation}
\end{lemma}

\begin{proof}
    First, let us recall that if $k_T$ is a broadband or band-limited kernel with $k_T(0) = 1$ (see Table~\ref{tab:wrapup} and Assumption~\ref{ass:covariance}), then its spectral measure is absolutely continuous and has density $S_T$ with total mass $\int_{-\infty}^{+\infty} S_T(z) dz = k_T(0) = 1$. Therefore, $S_T \in L^1(\mathbb{R})$ and using Bochner's theorem~\citep{bochner2005harmonic} yields
    \begin{equation}
        k_T(t) = \int_{-\infty}^{+\infty} S_T(z) e^{2\pi i t z} dz.
    \end{equation}

    Applying the Riemann-Lebesgue lemma\footnote{The Fourier transform $\hat f$ of a function $f \in L^1(\mathbb{R})$ is continuous and satisfies $\lim_{x \to \infty} \hat{f}(x) = 0$~\citep{stein2011fourier}.} to the function $S_T$ immediately yields the desired result.
\end{proof}

We now have all the necessary background to prove Proposition~\ref{prop:continuous_support_time_spectrum}.

\begin{proof}

Let us derive the circulant embedding of $\bm K_T^{(n)}$ built from time instants $\left(t_0, \cdots, t_{n-1}\right)$, with $t_j = j\Delta$. Its circulant approximation is formed by building the alternative kernel matrix $\Tilde{\bm K}_T^{(n)} = \left(\Tilde{k}_T(t_i, t_j)\right)_{i,j \in [0, n-1]}$ where the alternative temporal kernel is
\begin{align}
    \Tilde{k}_T(t_i, t_j) &= \begin{cases}
        k_T(0) &\text{if } i = j,\\
        k_T(|t_i - t_j|) + k_T(|t_{n-1} - t_0| - |t_i - t_j|) &\text{otherwise},
    \end{cases} \nonumber\\
    &= \begin{cases}
        k_T(0) &\text{if } i = j,\\
        k_T(\Delta|i-j|) + k_T(\Delta(n - |i-j|)) &\text{otherwise}.
    \end{cases} \label{eq:circulant_kernel}
\end{align}

As mentioned in the previous section, $\Tilde{\bm K}_T^{(n)}$ and $\bm K_T^{(n)}$ are asymptotically equivalent and therefore share the same spectrum when $n \to \infty$. For results when $n$ is finite (which is a setting of Proposition~\ref{prop:continuous_support_time_spectrum}), we will keep track of the approximation error with a term in $o(1)$. Because of~\eqref{eq:general_eigenvalue_circulant}, for all $0 \leq j \leq n-1$, the $j$-th eigenvalue of $\Tilde{\bm K}_T^{(n)}$ is
\begin{align}
    \lambda_j &= \sum_{l=0}^{n-1} \Tilde{k}_T(t_0, t_l) e^{\frac{-2\pi i (j - n/2) l}{n}} \nonumber\\
    &= \sum_{l=0}^{n-1} k_T(\Delta l) e^{\frac{-2\pi i (j-n/2) l}{n}} + \sum_{l=1}^{n-1} k_T(\Delta(n-l)) e^{\frac{-2\pi i (j - n/2) l}{n}} \label{eq:proof_use_definition_ktilde}\\
    &= \sum_{l=0}^{n-1} k_T(\Delta l) e^{\frac{-2\pi i (j - n/2) l}{n}} + \sum_{l=1}^{n-1} k_T(\Delta l) e^{\frac{-2\pi i (j - n/2) l}{n}} \label{eq:proof_reindexing}\\
    &= \sum_{|l| < n} k_T(\Delta l) e^{\frac{-2\pi i (j - n/2) l \Delta}{n \Delta}} \label{eq:proof_kT_even}\\
    &= \sum_{l \in \mathbb{Z}} k_T(\Delta l) e^{\frac{-2\pi i (j - n/2) l \Delta}{n \Delta}} - \sum_{|l| > n} k_T(\Delta l) e^{\frac{-2\pi i (j - n/2) l \Delta}{n \Delta}} \label{eq:proof_show_truncation_err}
\end{align}
where~\eqref{eq:proof_use_definition_ktilde} follows from~\eqref{eq:circulant_kernel}, \eqref{eq:proof_reindexing} from reindexing the terms in the right sum following $l' = n - l$ and~\eqref{eq:proof_kT_even} from $k_T$ being an even function.

Now, the Poisson summation on the function $g(t) = k_T(\Delta t) \exp\left(2\pi i(j-n/2)l \Delta/n \Delta\right)$ states that
\begin{align*}
    \sum_{l \in \mathbb{Z}} g(l) &= \sum_{l \in \mathbb{Z}} \hat g(l)
\end{align*}
where $\hat g$ is the Fourier transform of $g$, which is given by
\begin{align}
    \hat g(\xi) &= \int_{-\infty}^{+\infty} k_T(\Delta t) e^{\frac{-2\pi i (j - n/2) t \Delta}{n \Delta}} e^{-2\pi i t \xi} dt \nonumber\\
    &= \frac{1}{\Delta} \int_{-\infty}^{+\infty} k_T(u) e^{-2\pi i u \left(\frac{(j - n/2)}{n \Delta} + \frac{\xi}{\Delta}\right)} dt \label{eq:proof_var_change_u_deltat}\\
    &= \frac{1}{\Delta} S_T\left(\frac{j - n/2}{n\Delta} + \frac{\xi}{\Delta}\right) \label{eq:proof_ftransform_g}
\end{align}
where~\eqref{eq:proof_var_change_u_deltat} uses the change of variable $u = \Delta t$ and where $S_T(\omega) = \int_{-\infty}^{+\infty} k_T(t)e^{-2\pi i \omega t} dt$ in~\eqref{eq:proof_ftransform_g} is the Fourier transform of $k_T$.

Plugging~\eqref{eq:proof_ftransform_g} in~\eqref{eq:proof_show_truncation_err}, we have
\begin{align}
    \lambda_j &= \sum_{l \in \mathbb{Z}} \hat{g}(l) - \sum_{|l| > n} k_T(\Delta l) e^{\frac{-2\pi i (j - n/2) l \Delta}{n \Delta}} \nonumber\\
    &= \frac{1}{\Delta} \sum_{l \in \mathbb{Z}} S_T\left(\frac{j - n/2}{n\Delta} + \frac{l}{\Delta}\right) - \sum_{|l| > n} k_T(\Delta l) e^{\frac{-2\pi i (j - n/2) l \Delta}{n \Delta}} \nonumber\\
    &= \frac{1}{\Delta} S_T\left(\frac{j - n/2}{n\Delta}\right) + \underbrace{\frac{1}{\Delta} \sum_{l \in \mathbb{Z}^*} S_T\left(\frac{j - n/2}{n\Delta} + \frac{l}{\Delta}\right)}_{\text{aliasing error } A_n^{(j)}} - \underbrace{\sum_{|l| > n} k_T(\Delta l) e^{\frac{-2\pi i (j - n/2) l \Delta}{n \Delta}}}_{\text{truncation error } T_n^{(j)}}, \label{eq:proof_lambdaj_with_errs}
\end{align}
where~\eqref{eq:proof_lambdaj_with_errs} sheds light on two types of errors: first, $A_n^{(j)}$, the aliasing error due to the finite sampling frequency $1/\Delta$ on the $j$-th eigenvalue and second, $T_n^{(j)}$, the truncation error due to the finite number of observations $n$.

To conclude the proof, let us discuss how $A_n^{(j)}$ and $T_n^{(j)}$ scale w.r.t. $n$.

\paragraph{Aliasing error $A_n^{(j)}$.} The aliasing error will not vanish in general when $n \to \infty$ because it depends on the constant sampling frequency $1/\Delta$. Interestingly, if $k_T$ is band-limited, that is, if $\supp(S_T) = [-\tau, \tau]$ for $\tau >0$ (see Table~\ref{tab:wrapup}), and if $1/\Delta > 2\tau$,\footnote{Also known as the Nyquist condition, from the Nyquist Sampling Theorem~\citep{nyquist1928certain}.} then $S_T\left(\frac{j - n/2}{n\Delta} + \frac{l}{\Delta}\right) = 0$ for all $j = 0, \cdots, n-1$ and all $l \in \mathbb{Z}^*$. Consequently, in this setting, $A_n^{(j)} = 0$.

\paragraph{Truncating error $T_n^{(j)}$.} Unlike $A_n^{(j)}$, $T_n^{(j)}$ shrinks when $n \to \infty$. In fact,
\begin{align}
    T_n^{(j)} &= \sum_{|l| > n} k_T(l\Delta) e^{\frac{-2\pi i (j - n/2) l \Delta}{n \Delta}} \nonumber\\
    &\leq \sum_{|l| > n} |k_T(l\Delta)| \nonumber\\
    &\in o(1) \label{eq:proof_vanishing_kT}
\end{align}
where~\eqref{eq:proof_vanishing_kT} holds with Lemma~\ref{lem:kT_vanishes_asympt}.

\end{proof}

Note that the tools used in this proof (e.g., Poisson summation and Lemma~\ref{lem:kT_vanishes_asympt}) apply only if $S_T$ is well behaved (more particularly, continuous and in $L^1(\mathbb{R})$). Therefore, recall that Proposition~\ref{prop:continuous_support_time_spectrum} holds only for broadband and band-limited kernels.

\section{Temporal Matrix Spectrum Approximation for Almost-Periodic and Low-Rank Kernels} \label{app:low-rank_spectrum}

In this appendix, we prove Propositions~\ref{prop:low-rank_approx} and~\ref{prop:low-rank_spectrum}. Let us start by proving Proposition~\ref{prop:low-rank_approx}, which states that any almost-periodic kernel can be approximated by a low-rank kernel.

\begin{proof}
    Because the spectral density $S_T$ of an almost-periodic kernel is supported on a discrete set of infinite cardinality, it is necessarily an infinite mixture of Dirac deltas: $S_T(\omega) = \sum_{p \in \mathbb{Z}} \alpha_p \delta(\omega - \omega_p)$. By the Wiener-Khintchine theorem (e.g., see~\citet{chatfield2019analysis}), we have
    \begin{align}
        k_T(|t-t'|) &= \int_\mathbb{R} S_T(\omega) e^{2\pi i \omega |t-t'|} d\omega \label{eq:proof_pointer_spectral_density_almost_periodic}\\
        &= \int_\mathbb{R} \sum_{p \in \mathbb{Z}} \alpha_p \delta(\omega -\omega_p) e^{2\pi i \omega |t-t'|} d\omega \nonumber\\
        &= \sum_{p \in \mathbb{Z}} \alpha_p \int_\mathbb{R} \delta(\omega -\omega_p) e^{2\pi i \omega |t-t'|} d\omega \label{eq:proof_linearity_integral}\\
        &= \sum_{p \in \mathbb{Z}} \alpha_p e^{2\pi i \omega_p |t-t'|}, \label{eq:proof_property_dirac_delta}
    \end{align}
    where~\eqref{eq:proof_linearity_integral} comes from the linearity of integration and where~\eqref{eq:proof_property_dirac_delta} uses the property of Dirac distributions, that is, for any function $f$, $\int_\mathbb{R} \delta(\omega - \omega_j) f(\omega) d\omega = f(\omega_j)$.

    Note that $k_T$ must be a real, even function as it is a correlation function. This implies that $S_T$ is also an even function, which is ensured if, for any $p \in \mathbb{Z}$, $\omega_{-p} = \omega_p$ and $\alpha_{-p} = \alpha_p$. Furthermore, recall that $k_T(0) = 1$ (see Assumption~\ref{ass:covariance}). This is ensured by having $\sum_{p \in \mathbb{Z}} \alpha_p = 1$. Taking these constraints into account in~\eqref{eq:proof_property_dirac_delta}, we have
    \begin{equation} \label{eq:almost_periodic_cosine_form}
        k_T(|t-t'|) = \alpha_0 + 2 \sum_{p \in \mathbb{N}} \alpha_p \cos(2\pi \omega_p |t-t'|).
    \end{equation}

    The form~\eqref{eq:almost_periodic_cosine_form} shows that an almost-periodic kernel is necessarily a trigonometric polynomial with an infinite number of terms (i.e., an almost-periodic function as defined by~\citet{bohr1926theorie}). A core property of almost-periodic functions is that they can be approximated arbitrarily well by trigonometric polynomials. This is particularly intuitive in the case of almost-periodic kernels. In fact, let us assume without loss of generality that $\alpha_p \leq \alpha_{p'}$ if $p \leq p'$ for any $p, p' \in \mathbb{N}_+$. Then, for any $\epsilon > 0$, there exists $L \in \mathbb{N}$ such that $\alpha_0 + 2\sum_{p = 1}^L \alpha_p \geq 1 - \epsilon$. Then, letting $\tilde{k}_T(|t-t'|) = \alpha_0 + 2\sum_{p = 1}^L \cos(2\pi \omega_p |t-t'|)$, we have
    \begin{align}
        \left| k_T(|t-t'|) - \tilde{k}_T(|t-t'|) \right| &= \left| 2\sum_{p = L+1}^\infty \alpha_p \cos(2\pi \omega_p |t-t'|)\right| \nonumber\\
        &\leq 2\sum_{p = L+1}^\infty \alpha_p \left| \cos(2\pi \omega_p |t-t'|) \right| \nonumber\\
        &\leq 2\sum_{p = L+1}^\infty \alpha_p \label{eq:proof_abs_cos_is_leq_1}\\
        &= \epsilon \nonumber,
    \end{align}
    where~\eqref{eq:proof_abs_cos_is_leq_1} is due to $|\cos(x)| \leq 1$ for any $x \in \mathbb{R}$.
\end{proof}

We now prove Proposition~\ref{prop:low-rank_spectrum}, which provides an approximation of the spectrum of a temporal kernel matrix built with a low-rank kernel.

\begin{proof}
    Consider a stationary temporal kernel $k_T$ whose spectral density is supported on a finite discrete set. Then, its spectral density is necessarily a mixture of Dirac deltas of the form $S_T(\omega) = \alpha_0 \delta\left(\omega\right) + \sum_{j = 1}^L \alpha_j \delta(\omega -\omega_j)$. By a reasoning similar to the proof of Proposition~\ref{prop:low-rank_approx} (e.g., see~\eqref{eq:proof_pointer_spectral_density_almost_periodic}-\eqref{eq:proof_property_dirac_delta}), we have
    \begin{align*}
        k_T(|t-t'|) &= \alpha_0 + \sum_{j = 1}^L \alpha_j e^{2\pi i \omega_j |t-t'|}.
    \end{align*}

    Note that, as a correlation function, $k_T$ must be a real, even function. This implies that $S_T$ is also an even function, which is ensured if $L$ is an even natural number and if $\omega_{j + L/2} = -\omega_j$ and $\alpha_{j + L/2} = \alpha_j$ for all $j \in \left\{1, \cdots, L/2\right\}$. Furthermore, as a correlation function, $k_T(0) = 1$ (see Assumption~\ref{ass:covariance}). This is ensured by having $\sum_{j = 0}^L \alpha_j = 1$. We now have
    \begin{align}
        k_T(|t-t'|) &= \alpha_0 + \sum_{j = 1}^{L/2} \alpha_j\left(e^{2\pi i \omega_j |t-t'|} + e^{-2\pi i \omega_j |t-t'|}\right) \label{eq:proof_kt_low_rank_cexp}\\
        &= \alpha_0 + 2\sum_{j = 1}^{L/2} \alpha_j \cos(2\pi \omega_j |t-t'|), \label{eq:proof_kt_low_rank_cos}
    \end{align}
    where~\eqref{eq:proof_kt_low_rank_cos} uses the identity $\cos(x) = (e^{-ix}+e^{ix})/2$.

    Setting $c_0 = \alpha_0$ and $\forall j \in \left\{1, \cdots, L/2\right\}, c_j = 2\alpha_j$ proves the first claim of the proposition. Now, to derive an approximation of the spectrum of the empirical covariance matrix $\bm K^{(n)}_T$, let us derive the Mercer decomposition of $k_T$ from~\eqref{eq:proof_kt_low_rank_cexp}:
    \begin{align}
        k_T(|t-t'|) &= c_0 + \frac{1}{2} \sum_{j = 1}^{L/2} c_j\left(e^{2\pi i \omega_j |t-t'|} + e^{-2\pi i \omega_j |t-t'|}\right) \nonumber\\
        &= c_0 + \frac{1}{2} \sum_{j = 1}^{L/2} c_j e^{2\pi i \omega_{j} |t-t'|} + \frac{1}{2} \sum_{j = 1}^{L/2} c_j e^{-2\pi i \omega_{j} |t-t'|} \nonumber\\
        &= c_0 + \frac{1}{2} \sum_{j = 1}^{L/2} c_j e^{2\pi i \omega_{j} t} e^{-2\pi i \omega_{j} t'} + \frac{1}{2} \sum_{j = 1}^{L/2} c_j e^{2\pi i \omega_{j} t} e^{-2\pi i \omega_{j} t'} \nonumber\\
        &= c_0 \phi_0(t) \phi^*_0(t') + \frac{1}{2} \sum_{j = 1}^{L} c_{\lfloor j/2 \rfloor} \phi_j(t) \phi^*_j(t') \label{eq:proof_low-rank_mercer}
    \end{align}
    where $\phi_0(t) = 1$ and $\phi_j(t) = e^{2\pi i \omega_{j} t}$ for all $j \in \left\{1, \cdots, L\right\}$. Note that $\phi^*$ is the complex conjugate of $\phi$. This leads to a complex version of the Mercer decomposition.

    It is easy to infer from~\eqref{eq:proof_low-rank_mercer} that the eigenvalues of $\Sigma_{k_T}$ will come in pair (except for $c_0$) and that $\lambda^T_1 = c_0$, $\lambda^T_j = \frac{1}{2}c_{\lfloor j/2 \rfloor}$ for $j \in \left\{2, \cdots, L+1\right\}$ and $\lambda_j = 0$ for all $j > L+1$. Finally, because the temporal observations are collected deterministically (recall that $t_i = i\Delta$), the temporal covariance operator is exactly the empirical covariance operator of $k_T$: $(\Sigma_{k_T}f)(t) = \frac{1}{n} \sum_{i = 1}^n k_T(t, t_i) f(t_i)$. $\Sigma_{k_T}$ has exactly the same eigenvalues as $\frac{1}{n} \bm K^{(n)}_T$, therefore a simple rescaling by $n$ is necessary to obtain the temporal covariance matrix spectrum for a low-rank kernel~\eqref{eq:low-rank_spectrum}.
\end{proof}

\section{Algorithm-Independent Lower Regret Bound for Broadband and Band-Limited Temporal Kernels} \label{app:lower_bound}

Theorem~\ref{thm:lower_regret_bound} is established by analyzing the asymptotic regret of an oracle. In this appendix, we describe this oracle and prove the theorem.

\paragraph{The Oracle.} Using the same idea as~\citet{bogunovic2016time}, we consider an idealized TVBO algorithm that is able to observe exactly (i.e., without any noise) the entire objective function $f(\cdot, t_n)$ when it queries a point $(\bm x_n, t_n)$. Figure~\ref{fig:oracle} illustrates why the oracle has a significant advantage over any regular BO algorithm. Unlike \citet{bogunovic2016time}, $f$ is not assumed to evolve in a Markovian setting where $f(\bm x, t_n) | f(\cdot, t_{n-1})$ is independent from any observation made at time $t' < t_{n-1}$. Concretely, this means that all the past observations (i.e., not only the last one) bring useful information to the surrogate model. This allows us to derive a lower regret bound for an arbitrary temporal kernel $k_T$ rather than just for the exponential temporal kernel.

\begin{figure}[t]
    \centering
    \includegraphics[height=5cm]{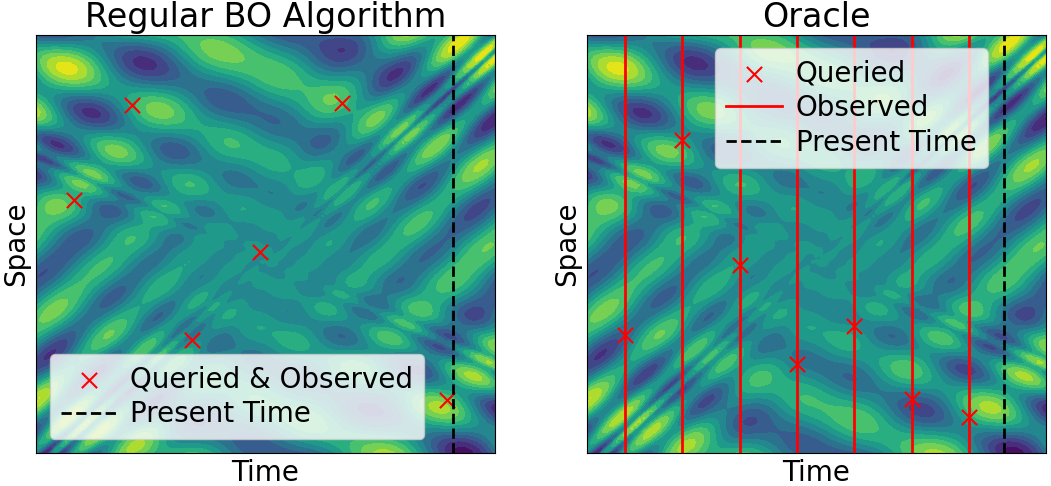}
    \caption{Comparison between a regular BO algorithm and the oracle built in this appendix. The temporal (resp., spatial) domain is represented by the x (resp., y)-axis. An arbitrary objective function $f$ is depicted in the background by a colored contour plot. The present running time is shown as a black vertical dashed line. (Left)~At each iteration, a regular BO algorithm is allowed to observe a function value $f(\bm x, t)$ at a specific location in space-time $(\bm x, t) \in \mathcal{S} \times \mathcal{T}$ shown as red crosses. (Right)~At each iteration, the oracle also queries a point $(\bm x, t)$ in space-time (shown with red crosses), but is allowed to observe the whole function $f(\cdot, t)$ on the spatial domain (shown with red vertical lines).}
    \label{fig:oracle}
\end{figure}

Let us start by deriving the inference formulas provided by the GP surrogate of the oracle at time $t_n$.

\begin{lemma} \label{lem:oracle_inference}
    Let $\mathcal{D} = \left\{f(\cdot, t_j)\right\}_{j \in [n]}$ be the oracle dataset after $n$ observations, where $t_j = j\Delta$. Then, $f(\bm x, t_n) | \mathcal{D} \sim \mathcal{N}\left(\mu_\mathcal{D}(\bm x, t_n), \sigma^2_{\mathcal{D}}\left(\bm x, t_n\right)\right)$ where
    \begin{align}
        \mu_\mathcal{D}(\bm x, t_n) &= k_T^\top(t_n, \mathcal{D}) \left(\bm K_T^{(n)}\right)^{-1} \bm f(\bm x, \mathcal{D}), \label{eq:oracle_post_mean}\\
        \Cov_{\mathcal{D}}\left((\bm x, t_n), (\bm x', t_n)\right) &= k_S(\bm x, \bm x')\left(1 - k_T^\top\left(t_n, \mathcal{D}\right) \left(\bm K_T^{(n)}\right)^{-1} k_T\left(t_n, \mathcal{D}\right)\right), \label{eq:oracle_post_cov}
    \end{align}
    where $\bm K_T^{(n)}= k_T(\mathcal{D}, \mathcal{D})$, $k_T(\mathcal{X}, \mathcal{Y}) = \left(k_T(t_i, t_j)\right)_{t_i \in \mathcal{X}, t_j \in \mathcal{Y}}$ and $\bm f(\bm x, \mathcal{D}) = \left(f(\bm x, t_i)\right)_{t_i \in \mathcal{D}}$.
\end{lemma}

\begin{proof}
    The expressions for the posterior mean~\eqref{eq:posterior_mean} and the posterior covariance~\eqref{eq:posterior_cov} hold only under a finite set of observations in space and time. Because the oracle's dataset $\mathcal{D}$ contains continuous observations in the spatial domain $\mathcal{S}$, new closed forms for continuous observations in $\mathcal{S}$ but discrete observations in $\mathcal{T}$ must be derived. The analytic form of the posterior mean is
    \begin{equation}
        \mu_\mathcal{D}(\bm x, t_n) = \oint_{\mathcal{S}}\oint_{\mathcal{S}} \sum_{i,j = 0}^{n-1} k((\bm x, t_n), (\bm u, t_i)) k^{-1}\left((\bm u, t_i), (\bm v, t_j)\right) f(\bm v, t_j) d\bm u d\bm v, \label{eq:continuous_post_mean}
    \end{equation}
    while the analytic form of the posterior variance is
    \begin{equation} \label{eq:continuous_post_cov}
        \begin{split}
            \Cov_{\mathcal{D}}\left((\bm x, t_n), (\bm x', t_n)\right) = &k_S(\bm x, \bm x') -\\ &\oint_{\mathcal{S} \times \mathcal{S}} \sum_{i,j = 0}^{n-1} k((\bm x, t_n), (\bm u, t_i)) k^{-1}\left((\bm u, t_i), (\bm v, t_j)\right) k((\bm x', t_n), (\bm v, t_j)) d\bm u d\bm v,
        \end{split}
    \end{equation} 
    where $k^{-1}$ is the kernel associated with the integral covariance operator $\Sigma_{k^{-1}}$, which is the inverse of the integral covariance operator $\Sigma_k$ associated with the kernel $k$ (see Appendix~\ref{app:covariance} for a detailed discussion on this operator). Note that~\eqref{eq:continuous_post_cov} corresponds to the special case where $(\bm x, t_n)$ and $(\bm x', t_n)$ share the same time coordinate $t_n$.

    Let us rewrite~\eqref{eq:continuous_post_mean} with the Mercer representations of the kernels $k_S$ and $k^{-1}$ derived in Appendix~\ref{app:covariance} (see~\eqref{eq:mercer_space} and~\eqref{eq:mercer_invkernel}). Using the orthonormality property~\eqref{eq:eigen_orthonormality}, we have
    \begin{align}
        \mu_\mathcal{D}(\bm x, t_n) &= \sum_{i,j = 0}^{n-1} k_T(t_i, t_n) \sum_{l,m,p = 0}^\infty \frac{\lambda_l^S}{\lambda_m^S \lambda_p^T} \phi^S_l(\bm x) \phi^T_p(t_i) \phi_p^T(t_j) \underbrace{\oint_{\mathcal{S}} \phi_l^S(\bm u) \phi_m^S(\bm u) d\bm u}_{\delta_{lm}} \oint_{\mathcal{S}} \phi_m^S(\bm v) f(\bm v, t_j) d\bm v \nonumber\\
        &= \sum_{i, j = 0}^{n-1} k_T(t_i, t_n) \underbrace{\sum_{p=0}^\infty \frac{1}{\lambda_p^T} \phi^T_p(t_i) \phi_p^T(t_j)}_{k_T^{-1}(t_i, t_j)} \underbrace{\oint_{\mathcal{S}} \sum_{l = 0}^\infty \phi_l^S(\bm x) \phi_l^S(\bm v) f(\bm v, t_j) d\bm v}_{f(\bm x, t_j)} \label{eq:proof_mean_oracle_intermediate}\\
        &= \sum_{i = 0}^{n-1} \sum_{j = 0}^{n-1} k_T(t_i, t_n) k_T^{-1}(t_i, t_j) f(\bm x, t_j) \label{eq:proof_mean_oracle_final}, 
    \end{align}
    where $\delta_{lm}$ is the Kronecker delta and~\eqref{eq:proof_mean_oracle_intermediate} and~\eqref{eq:proof_mean_oracle_final} follow directly from the orthogonality of eigenfunctions and Mercer representations. The remaining inverse kernel $k_T^{-1}$ is defined over the discrete set $\left\{t_0, \cdots, t_{n-1}\right\}$, hence $k_T^{-1}(t_i, t_j)$ is the element at the $i$-th row and the $j$-th column of $\left(\bm K_T^{(n)}\right)^{-1}$. Writing~\eqref{eq:proof_mean_oracle_final} in its matrix form yields the oracle posterior mean~\eqref{eq:oracle_post_mean}.

    Now, let us study the integrals involved in~\eqref{eq:continuous_post_cov} with the Mercer representations of $k_S$ and $k^{-1}$ (see~\eqref{eq:mercer_space} and~\eqref{eq:mercer_invkernel}):
    \begin{align}
        & \oint_{\mathcal{S}} \oint_{\mathcal{S}} \sum_{i,j = 0}^{n-1} k((\bm x, t_n), (\bm u, t_i)) k^{-1}\left((\bm u, t_i), (\bm v, t_j)\right) k((\bm x', t_n), (\bm v, t_j)) d\bm u d\bm v \nonumber\\
        =& \sum_{i,j = 0}^{n-1} k_T(t_i, t_n) k_T(t_j, t_n) \sum_{l,m,p,q=0}^\infty \frac{\lambda_l^S \lambda_q^S}{\lambda_m^S \lambda_p^T} \phi^S_l(\bm x) \phi^S_q(\bm x') \phi_p^T(t_i) \phi_p^T(t_j) \underbrace{\oint_{\mathcal{S}} \phi^S_l(\bm u) \phi^S_m(\bm u) d\bm u}_{\delta_{lm}} \underbrace{\oint_{\mathcal{S}} \phi^S_m(\bm v) \phi^S_q(\bm v) d\bm v}_{\delta_{mq}} \label{eq:proof_cov_oracle_intermediate_1}\\
        =& \sum_{i,j = 0}^{n-1} k_T(t_i, t_n) k_T(t_j, t_n) \underbrace{\sum_{l=0}^\infty \lambda_l^S \phi^S_l(\bm x) \phi^S_l(\bm x')}_{k_S(\bm x, \bm x')} \underbrace{\sum_{p=0}^\infty \frac{1}{\lambda_p^T} \phi^T_p(t_i) \phi^T_p(t_j)}_{k_T^{-1}(t_i, t_j)} \label{eq:proof_cov_oracle_intermediate_2}\\
        =& k_S(\bm x, \bm x') \sum_{i,j = 0}^{n-1} k_T(t_i, t_n) k_T^{-1}(t_i, t_j) k_T(t_j, t_n), \label{eq:proof_cov_oracle_final}
    \end{align}
    where $\delta_{lm}$ and $\delta_{mq}$ are Kronecker deltas and~\eqref{eq:proof_cov_oracle_intermediate_1},  \eqref{eq:proof_cov_oracle_intermediate_2} and~\eqref{eq:proof_cov_oracle_final} follow directly from the orthogonality of eigenfunctions and Mercer representations. Again, since $k_T^{-1}$ is defined over the discrete set $\left\{t_0, \cdots, t_{n-1}\right\}$, $k_T^{-1}(t_i, t_j)$ is the element at the $i$-th row and the $j$-th column of $\left(\bm K_T^{(n)}\right)^{-1}$. Rewriting~\eqref{eq:proof_cov_oracle_final} in its matrix form, we get
    \begin{align*}
        \Cov_{\mathcal{D}}\left((\bm x, t_n), (\bm x', t_n)\right) &= k_S(\bm x, \bm x') - k_S(\bm x, \bm x') k_T^\top(t_n, \mathcal{D}) \left(\bm K_T^{(n)}\right)^{-1} k_T(t_n, \mathcal{D}) \nonumber\\
        &= k_S(\bm x, \bm x') (1 - k_T^\top(t_n, \mathcal{D}) \left(\bm K_T^{(n)}\right)^{-1} k_T(t_n, \mathcal{D})),
    \end{align*}
    which is the desired result.
\end{proof}

Note that Lemma~\ref{lem:oracle_inference} retrieves the oracle inference formulas derived in Appendix~F of \citet{bogunovic2016time}, but is more general as it can be used for any arbitrary isotropic covariance functions $k_S$ and $k_T$ and an arbitrary number of observations. We now use Lemma~\ref{lem:oracle_inference} to compute the expected instantaneous regret of the oracle.

Recall that the immediate regret is defined by $r_n = f(\bm x^*_n, t_n) - f(\bm x_n, t_n)$, where $\bm x_n$ is the point in $\mathcal{S}$ queried at time $t_n$ by an arbitrary BO algorithm\footnote{Recall that we are deriving an algorithm-independent regret bound.} and $\bm x^*_n = \argmax_{\bm x \in \mathcal{S}} f(\bm x, t_n)$ is the true maximizer of $f$ at time $t_n$. Because $f$ and $\bm x^*_n$ are random objects, $r_n$ is a random variable with a convoluted distribution. In the following, we propose to bound $r_n$ from below almost surely by $\tilde{r}_n = \max(0, f(\bm x^+_n, t_n)-f(\bm x_n, t_n))$ where $\bm x^+_n$ is the Bayes' optimizer 
\begin{equation}
    \bm x^+_n = \argmax_{\bm x \in \mathcal{S}} \mu_\mathcal{D}(\bm x, t_n). \label{eq:proof_bayes_optimizer}
\end{equation}

Crucially, note that unlike the true optimizer $\bm x^*_n$, $\bm x^+_n$ is a deterministic object given a dataset $\mathcal{D}$. Therefore, $\tilde{r}_n$ is still a random variable, but is distributed according to a truncated Gaussian distribution, which is much simpler to study. The next lemma proves that it is suited for bounding the regret $r_n$ from below.

\begin{lemma} \label{lem:probabilistic_regret}
    Let $\tilde{r}_{n} = \max(0, f(\bm x^+_{n}, t_n) - f(\bm x_{n}, t_n))$. Then, $\tilde{r}_{n} \leq r_{n}$ almost surely.
\end{lemma}

\begin{proof}
    Let us compare the random variables $\tilde{r}_{n} = \max(0, f(\bm x^+_{n}, t_n) - f(\bm x_{n}, t_n))$ and $r_{n} = f(\bm x^*_{n}, t_n) - f(\bm x_{n}, t_n)$. We have
    \begin{align}
        r_{n} &= f(\bm x^*_{n}, t_n) - f(\bm x_{n}, t_n) \nonumber\\
        &= \max(0, f(\bm x^*_{n}, t_n) - f(\bm x_{n}, t_n)) \label{eq:proof_max_pe_regret}\\
        &\geq \max(0, f(\bm x^+_{n}, t_n) - f(\bm x_{n}, t_n)) \label{eq:proof_true_argmax}\\
        &= \tilde{r}_{n} \nonumber
    \end{align}
    where~\eqref{eq:proof_max_pe_regret} comes from $r_{n}$ being nonnegative and~\eqref{eq:proof_true_argmax} is due to $f(\bm x^*_n, t_n) \geq f(\bm x^+_{n}, t_n)$ by definition of $\bm x^*_n$.
\end{proof}

We now bound $\mathbb{E}\left[\tilde{r}_{n}\right]$ from below.

\begin{lemma} \label{lem:regret_expectation}
    Let $\tilde{r}_n = \max\left(0, f(\bm x^+_n, t_n) - f(\bm x_n, t_n)\right)$.  Then,
    \begin{equation}
        \mathbb{E}\left[\tilde{r}_n\right] \in \Omega\left(1 - k_T^\top(t_n, \mathcal{D}) \left(\bm K_T^{(n)}\right)^{-1} k_T(t_n, \mathcal{D})\right). \label{eq:regret_expectation}
    \end{equation}
\end{lemma}

\begin{proof}
    Let us start by looking at the difference $f(\bm x, t_n) - f(\bm x', t_n)$ between two function values located at the same point in time $t_n \in \mathcal{T}$, but at different points in space $\bm x, \bm x' \in \mathcal{S}$. Because $f$ is a GP, $f(\bm x, t_n)$ and $f(\bm x', t_n)$ are jointly Gaussian, with marginals $f(\bm x, t_n) \sim \mathcal{N}(\mu_\mathcal{D}(\bm x, t_n), \sigma^2_\mathcal{D}(\bm x, t_n))$ and $f(\bm x', t_n) \sim \mathcal{N}(\mu_\mathcal{D}(\bm x', t_n), \sigma^2_\mathcal{D}(\bm x', t_n))$. Therefore, $\left(f(\bm x, t_n) - f(\bm x', t_n)\right) \sim \mathcal{N}\left(\mu_d(\bm x, \bm x'), \sigma^2_d(\bm x, \bm x')\right)$ where
    \begin{align}
        \mu_d(\bm x, \bm x') &= \mu_\mathcal{D}(\bm x, t_n) - \mu_\mathcal{D}(\bm x', t_n), \label{eq:oracle_regret_mean}\\
        \sigma^2_d(\bm x, \bm x') &= \sigma^2_\mathcal{D}(\bm x, t_n) + \sigma^2_\mathcal{D}(\bm x', t_n) - 2\Cov_\mathcal{D}((\bm x, t_n), (\bm x', t_n)) \nonumber\\
        &= 2 \left(1 - k_S\left(\bm x, \bm x'\right)\right) \left(1 - k_T^\top(t_n, \mathcal{D}) \left(\bm K_T^{(n)}\right)^{-1} k_T(t_n, \mathcal{D})\right) \label{eq:oracle_regret_var}\\
        &\in \Theta\left(1 - k_T^\top(t_n, \mathcal{D}) \left(\bm K_T^{(n)}\right)^{-1} k_T(t_n, \mathcal{D})\right), \nonumber
    \end{align}
    and where~\eqref{eq:oracle_regret_var} follows directly from Lemma~\ref{lem:oracle_inference}.

    Recalling the definition of $\bm x^+_n$ given by~\eqref{eq:proof_bayes_optimizer} immediately yields $\mu_d(\bm x^+_n, \bm x_n) \geq 0$. Furthermore, because $\tilde{r}_n = \max(0, f(\bm x_n^+, t_n) - f(\bm x_n, t_n))$, the distribution of $\tilde{r}_n$ is a truncated normal, whose first moment is
    \begin{align}
        \mathbb{E}[\tilde{r}_n] &= \mu_d(\bm x^+_n, \bm x_n) \Phi\left(\frac{\mu_d(\bm x^+_n, \bm x_n)}{\sigma_d(\bm x^+_n, \bm x_n)}\right) + \sigma_d(\bm x^+_n, \bm x_n) \varphi\left(\frac{\mu_d(\bm x^+_n, \bm x_n)}{\sigma_d(\bm x^+_n, \bm x_n)}\right)\nonumber\\
        &\geq \sigma_d(\bm x^+_n, \bm x_n) \varphi\left(0\right) \label{eq:proof_use_mean_positive}\\
        &\in \Theta\left(1 - k_T^\top(t_n, \mathcal{D}) \left(\bm K_T^{(n)}\right)^{-1} k_T(t_n, \mathcal{D})\right), \nonumber
    \end{align}
    where $\varphi$ (resp., $\Phi$) is the p.d.f. (resp., c.d.f.) of $\mathcal{N}(0, 1)$ and where~\eqref{eq:proof_use_mean_positive} uses the fact that $\mathbb{E}\left[\tilde{r}_n\right]$ is the lowest when $\mu_d(\bm x^+_n, \bm x_n) \geq 0$ is minimized. This concludes the proof.
\end{proof}

Lemma~\ref{lem:regret_expectation} provides a lower bound on the expectation of the regret as a function of the number of observations $n$. Let us now provide an asymptotic lower bound on~\eqref{eq:regret_expectation} as $n \to \infty$.

\begin{lemma} \label{lem:expected_regret_lb}
    There exist $C > 0$ and $0 \leq \delta < 1$ such that
    \begin{equation*}
        \lim_{n \to +\infty} \mathbb{E}\left[\tilde{r}_n\right] \geq C\left(1 - \delta\right).
    \end{equation*}
\end{lemma}

\begin{proof}
    To study $\mathbb{E}[\tilde{r}_n]$ asymptotically (i.e.,~when $n \to +\infty$), we must evaluate the limit of $1 - k_T^\top(t_n, \mathcal{D}) \left(\bm K_T^{(n)}\right)^{-1} k_T(t_n, \mathcal{D})$ as $n \rightarrow +\infty$. In the following, we rely on the circulant approximation $\Tilde{\bm K}_T^{(n)}$ of the kernel matrix $\bm K_T^{(n)}$ and its associated kernel~\eqref{eq:circulant_kernel}. Note that this approximation is actually exact when $n \to +\infty$. Please see Appendix~\ref{app:spectrum_approx_continuous_spectral_density} for a detailed discussion.

    Let us focus on the quadratic form
    \begin{equation} \label{eq:proof_qn}
        q_n = k_T^\top(t_n, \mathcal{D}) \left(\bm K_T^{(n)}\right)^{-1} k_T(t_n, \mathcal{D}).
    \end{equation}
    
    We have
    \begin{align}
        \lim_{n \to +\infty} q_n &= \lim_{n \to +\infty} \tilde{k}_T^\top(t_n, \mathcal{D}) \left(\tilde{\bm K}_T^{(n)}\right)^{-1} \tilde{k}_T(t_n, \mathcal{D}) \nonumber\\
        &= \lim_{n \to +\infty} \tilde{k}_T^\top(t_n, \mathcal{D}) \bm Q \bm \Lambda^{-1} \bm Q^\top \tilde{k}_T(t_n, \mathcal{D}) \label{eq:proof_eigendecomposition_ktilde}\\
        &= \lim_{n \to +\infty} \bm v_n^\top \bm \Lambda^{-1} \bm v_n \label{eq:proof_v_notation}
    \end{align}
    where~\eqref{eq:proof_eigendecomposition_ktilde} comes from the eigendecomposition $\Tilde{\bm K}_T^{(n)} = \bm Q \bm \Lambda^{-1} \bm Q^\top$ with $\bm Q = \left(\bm \phi_0, \cdots, \bm \phi_{n-1}\right)$ the orthogonal matrix whose $i$-th column is the $i$-th eigenvector of $\Tilde{\bm K}_T^{(n)}$ and $\bm \Lambda = \text{diag}\left(\lambda_0, \cdots, \lambda_{n-1}\right)$ the diagonal matrix of the corresponding eigenvalues (please refer to~\eqref{eq:eigenvector_circulant} and~\eqref{eq:proof_lambdaj_with_errs} for closed-form expressions of these quantities) and where $\bm v_n^\top = \Tilde{k}_T^\top(t_n, \mathcal{D}) \bm Q = (v_0, \cdots, v_{n-1})$.

    Now, we focus on one element $v_j, j = 1, \cdots, n$, of the vector $\bm v_n$. We have
    \begin{align}
        v_j &= \tilde{k}_T^\top(t_n, \mathcal{D}_n) \bm Q_{:j} \nonumber\\
        &= \frac{1}{\sqrt{n}} \sum_{l = 1}^n \tilde{k}_T(l\Delta) e^{\frac{-2\pi i (j - n/2)l}{n}} \label{eq:proof_sum_form_vj}\\
        &= \frac{1}{\sqrt{n}} \left(\sum_{|l| \leq n} k_T(l\Delta) e^{\frac{-2\pi i (j - n/2)l}{n}} - k_T(0)\right) \label{eq:proof_def_ktilde_vj}\\
        &= \frac{1}{\sqrt{n}} \left(\lambda_j - k_T(0) + 2k_T(n\Delta)\right), \label{eq:proof_use_def_lambdaj}\\
        &< \frac{\lambda_j}{\sqrt{n}} \label{eq:proof_vj_lower_than_lambdaj}
    \end{align}
    where~\eqref{eq:proof_sum_form_vj} expands the matrix-vector product of the previous line, \eqref{eq:proof_def_ktilde_vj}~uses the definition of $\tilde{k}_T$, \eqref{eq:proof_use_def_lambdaj}~holds by plugging~\eqref{eq:proof_lambdaj_with_errs} in~\eqref{eq:proof_def_ktilde_vj} and where~\eqref{eq:proof_vj_lower_than_lambdaj} holds when $n$ is large enough because of Lemma~\ref{lem:kT_vanishes_asympt}.
    
    We can now find a strict upper bound for $\lim_{n \to +\infty} q_n$:
    \begin{align}
        \lim_{n \to +\infty} q_n &= \lim_{n \to +\infty} \bm v_n^\top \bm \Lambda^{-1} \bm v_n \nonumber\\
        &= \lim_{n \to +\infty} \sum_{i = 0}^{n-1} \frac{v_i^2}{\lambda_i} \nonumber \\
        &< \lim_{n \to +\infty} \frac{1}{n} \sum_{i=0}^{n-1} \lambda_i \label{eq:proof_used_expression_vj}\\
        &= \lim_{n \to +\infty} \frac{1}{n} \Tr\left(\Tilde{\bm K}_T^{(n)}\right) \label{eq:proof_use_def_trace}\\
        &= 1, \label{eq:proof_qn_lower_1}
    \end{align}
    where~\eqref{eq:proof_used_expression_vj} uses~\eqref{eq:proof_vj_lower_than_lambdaj}, \eqref{eq:proof_use_def_trace} uses the fact that the trace of a matrix is the sum of its eigenvalues, and~\eqref{eq:proof_qn_lower_1} uses the fact that $\left(\Tilde{\bm K}_T^{(n)}\right)_{ii} = k_T(0) = 1$ (see Assumption~\ref{ass:covariance}).
    
    Now, because $0 \leq q_n \leq 1$ for all $n \in \mathbb{N}$ and because~\eqref{eq:proof_qn_lower_1} yields $\lim_{n \to +\infty} q_n < 1$, we can conclude that there exists $0 \leq \delta < 1$ such that $\lim_{n \to +\infty} q_n = \delta$. A strict lower bound on $\lim_{n \to +\infty} \mathbb{E}\left[\tilde{r}_n\right]$ follows immediately from the above using Lemma~\ref{lem:expected_regret_lb}. Indeed $\mathbb{E}\left[\tilde r_n\right] \in \Omega(1 - q_n)$ yields that there exists a constant $C$ and $N \in \mathbb{N}$ such that, for all $n > N$,
    \[
        \mathbb{E}\left[\tilde r_n\right] \geq C(1 - q_n).
    \]

    Taking limits we get
    \begin{align}
        \lim_{n \to +\infty} \mathbb{E}\left[\tilde{r}_n\right] &\geq \lim_{n \to +\infty} C(1 - q_n)\\
        &= C(1-\delta), \label{eq:proof_rn_does_not_shrink}
    \end{align}
    where~\eqref{eq:proof_rn_does_not_shrink} uses $\lim_{n \to +\infty} q_n = \delta$.
\end{proof}

Together, Lemmas~\ref{lem:probabilistic_regret} and~\ref{lem:expected_regret_lb} yield Theorem~\ref{thm:lower_regret_bound}. Indeed,
\begin{align}
    \mathbb{E}\left[R_n\right] &= \sum_{i = 1}^n \mathbb{E}\left[r_n\right] \nonumber\\
    &\geq \sum_{i = 1}^n \mathbb{E}\left[\tilde r_n\right] \label{eq:proof_use_lemma_e2}\\
    &\geq \sum_{i = 1}^n \lim_{n \to +\infty} \mathbb{E}\left[\tilde r_n\right] \nonumber\\
    &\geq \sum_{i = 1}^n C (1- \delta) \label{eq:proof_use_lemma_e4}\\
    &\in \Theta(n) \nonumber,
\end{align}
where~\eqref{eq:proof_use_lemma_e2} follows from Lemma~\ref{lem:probabilistic_regret} and where~\eqref{eq:proof_use_lemma_e4} follows from Lemma~\ref{lem:expected_regret_lb}.

\section{Upper Cumulative Regret Bound for GP-UCB-Based TVBO Algorithms} \label{app:upper_bound_spectral}

In this appendix, we provide all the details required to prove Theorem~\ref{thm:upper_regret_bound}. This proof is based on results and proof techniques introduced by~\citet{gpucb} and~\citet{bogunovic2016time}. We begin by discussing the reasons why these results apply to TVBO before deriving our own regret bounds based on the particularities of almost-periodic and low-rank temporal kernels.

\begin{lemma}[\citet{bogunovic2016time}] \label{lem:classical_cum_reg}
    Let $R_n = \sum_{i = 1}^n f(\bm x^*_i, t_i) - f(\bm x_i, t_i)$ where $\bm x^*_i = \argmax_{\bm x \in \mathcal{S}} f(\bm x, t_i)$, $\bm x_i = \argmax_{\bm x \in \mathcal{S}} \varphi_i(\bm x, t_i)$ and where $\varphi_i$ is GP-UCB. Let $\bm K^{(n)} = k(\mathcal{D}, \mathcal{D})$ be the covariance matrix on the dataset $\mathcal{D}$. Pick $\delta \in (0, 1)$. Then, with probability at least $1 - \delta$,
    \begin{equation} \label{eq:cum_reg_srinivas}
        R_n \leq \sqrt{C \beta_n n \gamma_n} + \frac{\pi^2}{6}
    \end{equation}
    where $C = 8 / \log(1 +\sigma^{-2}_0)$, where
    \begin{equation} \label{eq:def_beta}
        \beta_n = 2 \log\left(\frac{\pi^2 n^2}{3 \delta}\right) + 2d \log\left(dn^2b\sqrt{\log\left(\frac{da\pi^2n^2}{2\delta}\right)}\right),
    \end{equation}
    and where $\gamma_n$ is the information gain
    \begin{equation} \label{eq:def_info_gain}
        \gamma_n = \frac{1}{2} \sum_{i = 1}^n \log\left(1 + \sigma^{-2}_0 \lambda_i(\bm K^{(n)})\right).
    \end{equation}
\end{lemma}

\begin{proof}
    This is a direct application of Theorem~4.2 from~\citet{bogunovic2016time}, which adapts GP-UCB to the time-varying domain.
\end{proof}

We now bound the information gain $\gamma_n$ from above using the maximal information gain computed on a grid design.

\begin{lemma}[extension from~\citet{gpucb}] \label{lem:classical_info_gain_bound}
    For any $n \in \mathbb{N}$, let $\mathcal{T}_n = \{\Delta, \cdots, n\Delta\}$. Given $\tau > 0$, there exists a discretization of $\mathcal{S} \times \mathcal
    T_n$, denoted $D_n$ and of size $|D_n| \in \mathcal{O}(n^{\tau + 1})$, that verifies
    \begin{equation} \label{eq:nearest_neigh_distance}
        \forall (\bm x, t) \in \mathcal{S} \times \mathcal{T}_n, \exists \left[(\bm x, t)\right]_n \in D_n, ||(\bm x, t) - \left[(\bm x, t)\right]_n||_2 \in \mathcal{O}\left(n^{-\tau / d}\right).
    \end{equation}
    Furthermore,
    \begin{equation} \label{eq:info_gain_bound}
        \gamma_n \leq \frac{1/2}{1 - e^{-1}} \max_{m_1, \cdots, m_n : \sum_{i = 1}^n m_i = n} \sum_{i = 1}^n \log(1 + \sigma^{-2}_0 m_i \lambda_i(\bm K^{(D_n)})) + \mathcal{O}(n^{1 - \tau/d}),
    \end{equation}
    where $\bm K^{(D_n)}$ is the covariance matrix $k(D_n, D_n)$.
\end{lemma}

\begin{proof}
    For a fixed $n \in \mathbb{N}$, Lemma~7.7 of~\citet{gpucb} ensures the existence of such a discretization $\mathcal{S}_n$ of size $|\mathcal{S}_n| \in \mathcal{O}(n^\tau)$, which verifies that for any $\bm x \in \mathcal{S}$, there exists $[\bm x]_n \in \mathcal{S}_n$ such that $||\bm x - [\bm x]_n||_2 \in \mathcal{O}(n^{-\tau/d})$. Observe that extending such a discretization to $\mathcal{S} \times \mathcal{T}_n$, where $\mathcal{T}_n = \left\{\Delta, \cdots, n\Delta\right\}$ is trivial, since $\mathcal{T}_n$ is already a discrete set of cardinality $n$. Therefore, $D_n = \mathcal{S}_n \times \mathcal{T}_n$ satisfies~\eqref{eq:nearest_neigh_distance} and is of size $|D_n| \in \mathcal{O}(n^{\tau +1})$.

    The bound in~\eqref{eq:info_gain_bound} is a simple application of Lemmas~7.5 and~7.6 of~\cite{gpucb} under the existence of $D_n$.
\end{proof}

Given a fixed $n \in \mathbb{N}$, we have the guarantee that there exists a discretization of the compact space-time $\mathcal{S} \times \mathcal{T}_n$. We now extend the main theorem of~\cite{gpucb}.

\begin{theorem}[\citet{gpucb}] \label{thm:ucb_thm8}
    Fix $n \in \mathbb{N}$ and consider the compact domain $\mathcal{S} \times \mathcal{T}_n$. Let $B_k(n_*) = \sum_{i > n_*} \lambda_i(\Sigma_k)$, where $\left\{\lambda_i(\Sigma_k)\right\}_{i \in \mathbb{N}}$ is the operator spectrum of $k$ with respect to the uniform distribution over $\mathcal{S}$.\footnote{Note that only a distribution on $\mathcal{S}$ is necessary since the temporal components of the $i$-th observation is deterministic, i.e., $t_i = i\Delta \in \mathcal{T}_n$ for any $i \in [n]$.} Pick $\tau > 0$, let $s_n = C_2 n^{\tau + 1} \log n$ with $C_2 = 2(2\tau + 1)$. Then, for any $n_* \in [s_n]$
    \begin{equation} \label{eq:bound_on_info_gain_thm}
        \gamma_n \leq \frac{1/2}{1 - e^{-1}} \max_{r \in [n]} \left(n_* \log(rs_n\sigma^{-2}_0) + C_2\sigma^{-2}_0\left(1 - \frac{r}{n}\right) \log(n) \left(n^{\tau + 2} B_k(n_*) + 1\right)\right) + \mathcal{O}\left(n^{1 - \tau/d}\right).
    \end{equation}
\end{theorem}

\begin{proof}
    This is a direct application of Theorem~8 in~\citet{gpucb} on the compact domain $\mathcal{S} \times \mathcal{T}_n$, which follows from Lemma~\ref{lem:classical_info_gain_bound}.
\end{proof}

We now derive some useful properties of the operator spectrum of $k$ when $k_T$ is an almost-periodic or a low-rank kernel.

\begin{lemma} \label{lem:spectrum_tail_bound}
    For any $n_* \in \mathbb{N}$, let $B_k(n_*) = \sum_{i > n_*} \lambda_i(\Sigma_k)$, where $\Sigma_k$ is the operator associated with $k$ on $\mathcal{S} \times \mathcal{T}_n$ w.r.t. the uniform probability measure. Then, there is $L \in \mathbb{N}$ such that
    \begin{equation}
        B_k(n_*) \leq L \lambda_1(\Sigma_{k_T}) B_{k_S}(n_* / L).
    \end{equation}
\end{lemma}

\begin{proof}
    Recall that Propositions~\ref{prop:low-rank_approx} and~\ref{prop:low-rank_spectrum} yield that, when $k_T$ is (approximated by) a low-rank kernel, there exists an $L$ such that the operator spectrum of $k_T$ on the deterministic design $\mathcal{T}_n = \left\{\Delta, \cdots, n\Delta\right\}$ has at most $L$ positive eigenvalues. Furthermore, Proposition~\ref{prop:operator_spectrum} states that the operator spectrum of $k$ is built by computing the largest products of an eigenvalue from the spectrum of $\Sigma_{k_S}$ (which is constant with respect to $n$) and of an eigenvalue from the spectrum of $\Sigma_{k_T}$ (which is also constant with respect to $n$, see the end of Appendix~\ref{app:low-rank_spectrum} for a discussion). Therefore, we have
    \begin{align}
        B_k(n_*) &= \sum_{l > n_*} \lambda_l(\Sigma_k) \nonumber\\
        &= \sum_{l > n_*} \lambda_{i_l}(\Sigma_{k_S}) \lambda_{j_l}(\Sigma_{k_T}) \label{eq:proof_use_prod_eigval}\\
        &\leq \lambda_1(\Sigma_{k_T}) \sum_{l > n_*} \lambda_{i_l}(\Sigma_{k_S}) \label{eq:proof_use_lambda_T1}\\
        &\leq L \lambda_1(\Sigma_{k_T}) \sum_{l > \lfloor n_* / L \rfloor} \lambda_l(\Sigma_{k_S}) \label{eq:proof_use_Leigval_only}\\
        &= L \lambda_1(\Sigma_{k_T}) B_{k_S}(\lfloor n_* / L \rfloor), \label{eq:proof_use_bks_def} 
    \end{align}
    where~\eqref{eq:proof_use_prod_eigval} uses Proposition~\ref{prop:operator_spectrum}, \eqref{eq:proof_use_lambda_T1} uses $\lambda_1(\Sigma_{k_T}) \geq \lambda_i(\Sigma_{k_T})$ for any $i \in \mathbb{N}$, \eqref{eq:proof_use_Leigval_only} uses the fact that the spectrum of $\Sigma_{k_T}$ has only $L$ nonzero eigenvalues (see Proposition~\ref{prop:low-rank_spectrum}) and where~\eqref{eq:proof_use_bks_def} uses the definition of $B_{k_S}(\lfloor n_* / L \rfloor)$.
\end{proof}

We are now ready to prove the upper regret bounds for almost-periodic and low-rank temporal kernels provided by Theorem~\ref{thm:upper_regret_bound}.

\begin{proof}
    Using Lemma~\ref{lem:spectrum_tail_bound} in Theorem~\ref{thm:ucb_thm8} yields an upper bound on the information gain that requires only a bound on $B_{k_S}(n_*)$, that is, on the tail of the operator spectrum of the stationary kernel $k_S$ defined over the compact space $\mathcal{S}$ with respect to the uniform probability measure. This is, up to the constant $L\lambda_1(\Sigma_T)$ that does not affect the scaling rate of $\gamma_n$, similar to the bound on the information gain obtained in Theorem~8 of~\citet{gpucb}. Therefore, the same reasoning as in~\citet{gpucb} applies for common spatial kernels (e.g., Matérn, RBF) and yields $\gamma_n \in o(n)$. Plugging this bound in Lemma~\ref{lem:classical_cum_reg} immediately yields $R_n \in o(n)$.
\end{proof}

\end{document}